\title{Smooth Normalizing Flows}
\author{%
  Jonas Köhler\thanks{J.K and A.K. contributed equally to this work.}~~\footnotemark[2] \And Andreas Krämer\footnotemark[1]~~\footnotemark[2] \And Frank Noé~\footnotemark[2]~~\footnotemark[3]~~\footnotemark[4] \AND
  ~ \\
  \footnotemark[2] ~ Department of Mathematics and Computer Science, Freie Universität Berlin \\
  \footnotemark[3] ~ Department of Physics, Freie Universität Berlin \\
  \footnotemark[4] ~ Department of Chemistry, Rice University, Houston, TX \\
  \texttt{\{jonas.koehler, andreas.kraemer, frank.noe\}@fu-berlin.de}
  
}
\newtheorem{lemma}{Lemma}
\newtheorem{theorem}{Theorem}
\begin{document}

\maketitle

\begin{abstract}
Normalizing flows are a promising tool for modeling probability distributions in physical systems. While state-of-the-art flows accurately approximate distributions and energies, applications in physics additionally require smooth energies to compute forces and higher-order derivatives. Furthermore, such densities are often defined on non-trivial topologies. 
A recent example are Boltzmann Generators for generating 3D-structures of peptides and small proteins. These generative models leverage the space of internal coordinates (dihedrals, angles, and bonds), which is a product of hypertori and compact intervals. 
In this work, we introduce a class of smooth mixture transformations working on both compact intervals and hypertori.
Mixture transformations employ root-finding methods to invert them in practice, which has so far prevented bi-directional flow training. To this end, we show that parameter gradients and forces of such inverses can be computed from forward evaluations via the inverse function theorem.
We demonstrate two advantages of such smooth flows: they allow training by force matching to simulation data and can be used as potentials in molecular dynamics simulations. 
\end{abstract}
 
\section{Introduction}

Generative learning using \textit{normalizing flows} (NF) \cite{tabak2010density, rezende2015variational, papamakarios2019normalizing} has become a widely applicable tool in the physical sciences which has e.g. been used for sampling lattice models \cite{nicoli2020asymptotically, nicoli2021estimation, li2018neural, boyda2021sampling, albergo2019flow}, approximating the equilibrium density of molecular systems \cite{noe2019boltzmann, kohler2020equivariant, wu2020snf, xu2021learning} or estimating free energy differences \cite{wirnsberger2020targeted, ding2021deepbar}. 

Such models approximate a target density $\mu$ via diffeomorphic maps $f(\cdot; \bm \theta) \colon \Omega \subset \mathbb{R}^d \rightarrow \Omega$ by transforming samples $\bm z \sim p_{0}(\bm z)$ of a base density into samples $\bm x = f(\bm z; \bm \theta)$ such that they follow the push-forward density
\begin{align}
    \bm x \sim p_{f}(\bm x; \bm \theta) := p_{0}\left(f^{-1}(\bm x; \bm \theta)\right) \left|\det \partial_{\bm x} f^{-1}(\bm x; \bm \theta) \right|.
\end{align}
Flows can be trained on data by maximizing the likelihood or via minimizing of the reverse KL divergence $D_{KL}\left[p_{f}(\cdot; \bm \theta)\|\mu\right]$ if $\mu$ is known up to a normalizing constant.

While NFs are usually introduced as \textit{smooth} diffeomorphisms, most applications like density estimation or sampling only require $C^{1}$-smooth transformations.
Higher-order smoothness of flows has not been discussed so far and can become a challenge as soon as multi-modal transformations on other topologies than $\mathbb{R}^d$ are discussed. 

$C^{k}$-smoothness of NFs with $k > 1$ is especially important for applications in physics. Physical models usually come in the form of differential equations, where the derivatives bear a physical meaning that is often crucial to fit, evaluate, or interpret the model. Thus, the construction of expressive, smooth flow architectures will likely open up new avenues of research in this domain. 

\paragraph{Boltzmann Generators}
A recent application of NFs to a physical problem are Boltzmann Generators (BG) \cite{noe2019boltzmann}, which we see as the main application area for the methods introduced in this paper. They are generative models trained to sample conformations of molecules in equilibrium, which follow a Boltzmann-type distribution $\mu(\bm x) \propto \exp(-u(\bm x))$. Here $u$ is the dimensionless potential energy defined by the molecular system and the thermodynamic state (e.g. Canonical ensemble at a certain temperature) it is simulated in. BGs can be trained by a combination of MLE on possibly biased trajectory data and simultaneous minimization of the reverse KL divergence to the target density $\mu(\bm x)$. This bi-directional training scheme can achieve sampling of low-energy equilibrium structures. After training, BGs can be used e.g. for importance sampling or for providing efficient proposals when being used in MCMC applications \cite{sbailo2021neural}. 

In the context of BGs, the negative gradient of the log push-forward density with respect to $\bm x$ corresponds to the atomistic forces. Access to well-behaved forces is pivotal in most classical molecular computations: they simplify optimization of molecular models, drive molecular dynamics (MD) simulations, enable computations of macroscopic observables, and facilitate multiscale modeling.

In this work, we will primarily focus on two important implications of smooth flow forces. First, we show that they enable the training of NFs via force matching (FM), i.e. minimizing the force mean-squared error with respect to reference data. In combining FM with density estimation, NFs are pushed to match the distributions and their derivatives, which implicitly leads to favorable regularization. Second, we apply flow forces to drive dynamics simulations. Such simulations are required in many applications to not only sample the target distribution but also compute dynamical observables such as transition rates or diffusivities.

\paragraph{Respecting Topological Constraints} 
Many physical models operate on nontrivial topologies, i.e. the $d$-dimensional hyper-torus $\mathbb{T}^d$, which can become an obstacle in constructing smooth flows.

An important example are BGs for peptides and small proteins which require an internal coordinate (IC) transformation to achieve low-energy sampling of structures \cite{noe2019boltzmann}. This non-trainable layer transforms Euclidean coordinates $\bm x \in \mathbb{R}^{n \times 3}$ into a representation given by distances $\bm d \in [a_{1}, b_{1}] \times \ldots \times [a_{n-1}, b_{n-1}]$, angles $\bm \alpha \in [0, \pi]^{n-2}$ and dihedral torsion angles $\bm \tau \in \mathbb{T}^{n-3}.$ 
As molecular energies are often invariant to translation and rotation, IC transformations are useful as they are bijective with the equivalence class of all-atom coordinates that are identical up to global rotations and translations. The learning problem can then be reduced to modeling the joint distribution $\mu(\bm d, \bm \alpha, \bm \tau)$ which is supported on the nontrivial topological space $\mathcal{X}_{\text{IC}} := \mathbb{I}^{2n-3} \times \mathbb{T}^{n-3}$, where $\mathbb{I} = [0,1]$ denotes the closed unit interval.

Recent work \cite{noe2019boltzmann, wu2020snf} suggested to model the density within an open set $\Omega \subset\mathcal{X}_{\text{IC}}$, leverage $C^{\infty}$-smooth normalizing flows defined on $\mathbb{R}^d$ and then prevent significant mass around singular points using regularizing losses. Such an approach however can lead to bias and ill-behaved training and requires a-priori knowledge of the support of the densities. Later work \cite{dibak2020temperature} approached the problem using $C^{1}$-smooth splines flows which leads to accurate samples, however results in broken forces.

To overcome these limitations while still benefiting from the merits of prior work, such as bi-directional training and fast forward/inverse transformations we formulate the following desiderata for flow transformations on-top of IC layers:
(A) They must have support on $\mathbb{I}^{d}$ and $\mathbb{T}^d$.
(B) They should be $C^{\infty}$-smooth.
(C) They must allow bi-directional training.
(D) Forward and inverse direction must be efficient to evaluate.



Satisfying (D) can be achieved using coupling layers \cite{dinh2014nice, dinh2016rnvp}. This reduces the problem to finding element-wise conditional transformations satisfying (A-C).





\paragraph{Contributions} 
In this work we propose the following novelties:
\begin{itemize}
    \item We present a new $C^{\infty}$-smooth transformation with compact support which can simultaneously be used for expressive transformations on $\mathbb{I}^{d}$ as well as the hypertorus $\mathbb{T}^d$. This satisfies (A) and (B).
    \item 
    We present novel algorithm which allows optimizing non-analytic inverse directions of flows that can only be evaluated via black-box root finding methods. This satisfies (C).
    \item We show that training of smooth flows through combinations of force matching, density estimation, and energy-based training can achieve nearly perfect equilibrium densities.
    \item We show that forces of such smooth flows can be used in dynamical simulations.
\end{itemize}

\section{Related Work}
While related work exist for each of the requirements (A-D) above, none of them provides a framework that addresses all of them.
Specifically, multi-modality in element-wise flow transformations has been approached using splines \cite{durkan2019neural, muller2018neural}, monotonoic polynomials \cite{jaini2019sum, ramasinghe2021robust}, monotonic neural networks which directly approximate an inverse CDF transformation \cite{huang2018neural, de2019block}, mixtures of unimodal transformations \cite{ ho2019flow++, rezende2020normalizing} and stochastic transition steps \cite{wu2020snf, cornish2020relaxing}.

Flows on non-trivial manifolds have been discussed for hypertori and -spheres \cite{rezende2020normalizing}, Lie groups \cite{falorsi2019reparameterizing, boyda2021sampling}, hyperbolic spaces \cite{bose2020latent} as well as on general manifolds \cite{falorsi2021continuous, gemici2016normalizing, mathieu2020riemannian, lou2020neural, brehmer2020flows, kalatzis2021multi}.
Applications of flows for molecular systems include approximation of the equilibrium density \cite{noe2019boltzmann, wu2020snf, kohler2020equivariant}, generation of molecular conformations \cite{xu2021learning, satorras2021n} and free energy differences \cite{wirnsberger2020targeted, ding2021deepbar}.
Using forces to train neural network potentials of molecules was done e.g. in  \cite{wang2019machine, husic2020coarse}.

Backpropagation through black-box functions was generally discussed in \cite{grathwohl2017backpropagation}. More related to our work is \citet{bai2019deep} who discuss training models whose outputs are obtained through fix-point equations. Finally \citet{shirobokov2020black} discuss backpropagation through black-box simulators using surrogate models.

Using force-matching for training generative models is rarely used in machine learning due to absence of an (unnormalized) target density. However, a common method to train unnormalized energy-based models purely on data is given by \emph{score-matching} (SM) \cite{hyvarinen2005estimation, song2019generative}. SM assumes an unknown density for the data distribution and attempts to match forces implicitly, e.g. via sliced \cite{song2020sliced} or denoising \cite{vincent2011denoising} score-matching. A survey on score-matching and its relation to other approaches of training energy-based models on data is e.g. given in \citet{song2021score}.

We discuss related work on flows for $\mathbb{I}$ and the unit circle in detail in  Section \ref{sec:bump-functions}.

\section{Incorporating Forces into Flow Training}
NFs are most commonly trained by minimizing the negative log likelihood (NLL), 
\begin{align}
    \mathcal{L}_{\mathrm{NLL}}(\bm \theta) := -\mathbb{E}_{\bm x \sim \mu(\bm x)} [\log p_{f}(\bm x; \bm\theta)] = 
    D_\mathrm{KL}[\mu||p_{f}(\cdot; \bm\theta)]+ \mathrm{const},
\end{align}
or by minimizing the reverse KL divergence (KLD)
\begin{align}
    \mathcal{L}_{\mathrm{KLD}}(\bm \theta) :=
    D_\mathrm{KL}[p_{f}(\cdot; \bm\theta)||\mu] + \mathrm{const}.
\end{align}
BGs as discussed in \citet{noe2019boltzmann} use a convex combination of the two in order to avoid mode-collapse while still being able to achieve low-energy samples.
If reference forces $\mathbf{f}(\bm x) = - \partial_{\bm x} u(\bm x)$ corresponding to samples $\bm{x}$ are available and $f(\cdot; \bm \theta)$ is at least $C^{2}$-smooth, the optimization can naturally be augmented by the force mean-squared error:
\begin{align}
    \mathcal{L}_{\mathrm{FM}}(\bm \theta) := \mathbb{E}_{\bm x \sim \mu(\bm x)} \left[\left\| \mathbf{f}(\bm x) - \partial_{\bm{x}} \log p_{f}(\bm x; \bm \theta) \right\|_{2}^{2}\right].
\end{align}
As was shown in \citet{wang2019machine} such \textit{force-matching} can lead to unbiased potential surfaces even if samples are not sampled from equilibrium. Similarly to \citet{noe2019boltzmann} we can thus define a loss function for smooth flows as the convex combination
\begin{equation}
    \mathcal{L}(\bm \theta) = \omega_n \mathcal{L}_{\mathrm{NLL}}(\bm \theta)
        + \omega_k \mathcal{L}_{\mathrm{KLD}}(\bm \theta)
        + \omega_{\mathrm{f}}  \mathcal{L}_{\mathrm{FM}}(\bm \theta).
\label{eq:loss}
\end{equation}


\section{Smooth flows on the closed interval and the unit circle}
\label{sec:bump-functions}

We now discuss how to achieve smooth flows on $\mathbb{I}^d$ and $\mathbb{T}^{d}$. To unify the discussion we consider the unit circle $S^1$ to be the quotient space $\mathbb{I}/\sim$ using the relation $x \sim x' \Leftrightarrow (x = x') \lor (x = 0 \land x' = 1) \lor (x' = 0 \land x = 1)$. The $d$-dimensional hypertorus is given by the direct product $\mathbb{T}^d = S^{1} \times \ldots \times S^{1}$. Following the usual definition we say that $f$ is $C^{k}$-smooth on a compact interval $[a, b]$ iff there exists a $C^{k}$-smooth  continuation of $f$ on an open set $\Omega \supset [a,b]$. 

\paragraph{Smooth flows on $\mathbb{I}$}
Any smooth diffeomorphism on $\mathbb{R}$ or an open interval $(a, b) \supset \mathbb{I}$ can be restricted to $\mathbb{I}$ and re-scaled to satisfy this requirement. Let $f \colon \Omega \rightarrow \Omega$ be $C^{k}$-smooth for some open set $\mathbb{I} \subset \Omega \subset \mathbb{R}$. Then $\tilde f(x) = (f(x) - f(0)) / (f(1) - f(0))$
defines a $C^{k}$-smooth diffeomorphism on $\mathbb{I}$. Simple $C^{\infty}$ transformations on $\mathbb{R}$ are given by affine transformations \cite{dinh2016rnvp}. After re-scaling any affine map will result in the same identity mapping and thus will not be able to model complex densities. Powerful and computationally efficient multi-modal transformations on $\mathbb{I}$ can be achieved using rational-quadratic splines \cite{durkan2019neural}. Those are $C^{1}$-smooth and thus will result in discontinuous forces which can be disadvantageous for physical applications. Possible other $C^{\infty}$-smooth candidate transformations with analytic forward evaluation are given by mixtures of logistic transformations \cite{ho2019flow++} or deep sigmoid/deep dense sigmoid flows \cite{huang2018neural}. Those are only continuous on $(0,1)$ and thus special care has to be taken to avoid problematic behavior on the tails. Finally, multi-modal $C^{\infty}$ transformations on $\mathbb{R}$ can be achieved using non-analytic methods \cite{chen2018neural, wehenkel2019unconstrained}. Yet, computational costs (solving and backpropagating over ODEs, inverting quadrature integrations via bisections) and numerical accuracy (non-anlytic forward passes) limit their applicability e.g. when used in deep coupling layers which are required to capture multivariate correlations or when trying to match densities up to high accuracy e.g. as necessary for molecular modeling.

\paragraph{Smooth flows on $S^1$}
Flows on the hypertorus were discussed in \citet{rezende2020normalizing} who introduced necessary conditions for the transformations to define $C^{0}$-continuous densities, such that they can be used in density estimation tasks.
In their work \citet{rezende2020normalizing} introduced three candidates satisfying this condition: mixtures of non-compact projections (NCP), rational-quadratic circular spline flows (CSF) and mixtures of Moebius transformations (MoMT). While NCPs and CSFs satisfy $C^1$-continuity and thus render forces discontinuous, only MoMTs define smooth densities. While MoMTs trivially also define $C^{\infty}$-flows on $\mathbb{I}$ their periodicity would be limiting when applied to non-periodic densities.

\paragraph{Smooth compact bump functions}
In addition to this previous work we leverage a third way to construct smooth transformations which works for both $\mathbb{I}$ and $S^1$. Here we follow a general construction principle for smooth bump functions e.g. as explained in \citet{Tu2008}. All proofs can be found in the supplementary material.

First, define a $C^{k}$-smooth and strictly increasing ramp function $\rho \colon \mathbb{I} \rightarrow \mathbb{I}$ with $\rho(0)=0$ and $\rho(1)=1$. Second, define the generalized sigmoid 
\begin{align}
    \sigma[\rho](x) := \frac{\rho(x)}{\rho(x) + \rho(1 - x)}.
\end{align}
Then $\sigma[\rho]$ will be a $C^{k}$-diffeomorphsim on $\mathbb{I}$ and furthermore all derivatives up to order $k$ vanish at the boundary.
The first derivative defines a non-negative smooth bump function with
support $[0,1]$
and maximum at $0.5$.
We can introduce a concentration parameter $a \in \mathbb{R}_{>0}$  and a location parameter $b\in [0,1]$, which makes $g(x) := \sigma[\rho]\left(a \cdot (x - b) + \tfrac{1}{2}\right)$
a smooth bijection from $[-\frac{1}{2a} + b, \frac{1}{2a} + b]$ to $[0, 1]$ with all values and higher-order derivatives vanishing outside the domain.
Furthermore, by introducing $c \in (0, 1]$ and setting 
\begin{align}
    f(x) &:= (1 - c) \cdot \left( \frac{g(x) - g(0)}{g(1) - g(0)} \right) + c \cdot x \label{eq:bump-function-transformation},
\end{align}
we can define flexible unimodal $C^k$-diffeomorphisms on $[0,1]$. When used within coupling layers, we let $a, b, c$ be the output of some not further constrained neural network.

\paragraph{$C^{k}$ and $C^\infty$ ramp functions}
There are many possible choices for ramp functions satisfying above mentioned properties. A simple $C^{k}$ ramp function is given by the $k$-th order monomial $\rho(x) = x^{k}$. More interestingly, $C^{\infty}$ smoothness on $[0,1]$ can be achieved using the ramp $\rho(x) = \exp\left(-\frac{1}{\alpha\cdot x^\beta}\right)$ for $x > 0$ and $\rho(x) = 0$ for $x \leq 0$.
Here we set $\alpha > 0$ and $\beta > 1$. We make $\alpha$ a trainable parameter. Optimizing $\beta$ is possible in principle, yet fixing $\beta \in \{1, 2\}$ and treating it as a hyper-parameter stabilized training and led to better results.

\paragraph{Smooth and efficient circular wrapping}
As discussed in \citet{rezende2020normalizing} and \citet{falorsi2019reparameterizing} we can turn any $C^{k}$-smooth density $p(x)$ with support on $\mathbb{R}$ into a $C^{k}$-smooth density on $S^1$ using the marginalization $\tilde p(x) = \sum_{k \in \mathbb{Z}} p(x + k)$.
This construction is generally problematic due to two reasons: (i) non-vanishing tails (e.g. if $p(x)$ is Gaussian) require evaluating an infinite series which in most cases has no analytic expression and can be numerically challenging (ii) smooth densities with compact support in some interval $[a, b]$, for example sigmoidal transforms, can introduce non-smooth behavior at the interval boundaries.

Smooth and compactly supported transformations as introduced above do not suffer from this: (i) due to compact support the series will always have a finite number of non-vanishing contributions, (ii) as the transformations are $C^{k}$ on all of $\mathbb{R}$ and all derivatives are compactly supported wrapping will always result in a $C^{k}$-smooth density on $S^1$.

\paragraph{Multi-modality via mixtures}
Similarly, as discussed in prior work \cite{ho2019flow++, rezende2020normalizing} any convex sum of $C^{k}$-diffeomorphisms on $\mathbb{I}$ defines a $C^{k}$-diffeomorphism on $\mathbb{I}$. Thus we can combine multiple of those unimodal transformations defined in Eq. \eqref{eq:bump-function-transformation} to obtain arbitrarily complex multi-modal transformations on $\mathbb{I}$ and $S^1$, see Fig. \ref{fig:schema}.
\begin{figure}
    \centering
    \includegraphics[width=0.49\textwidth]{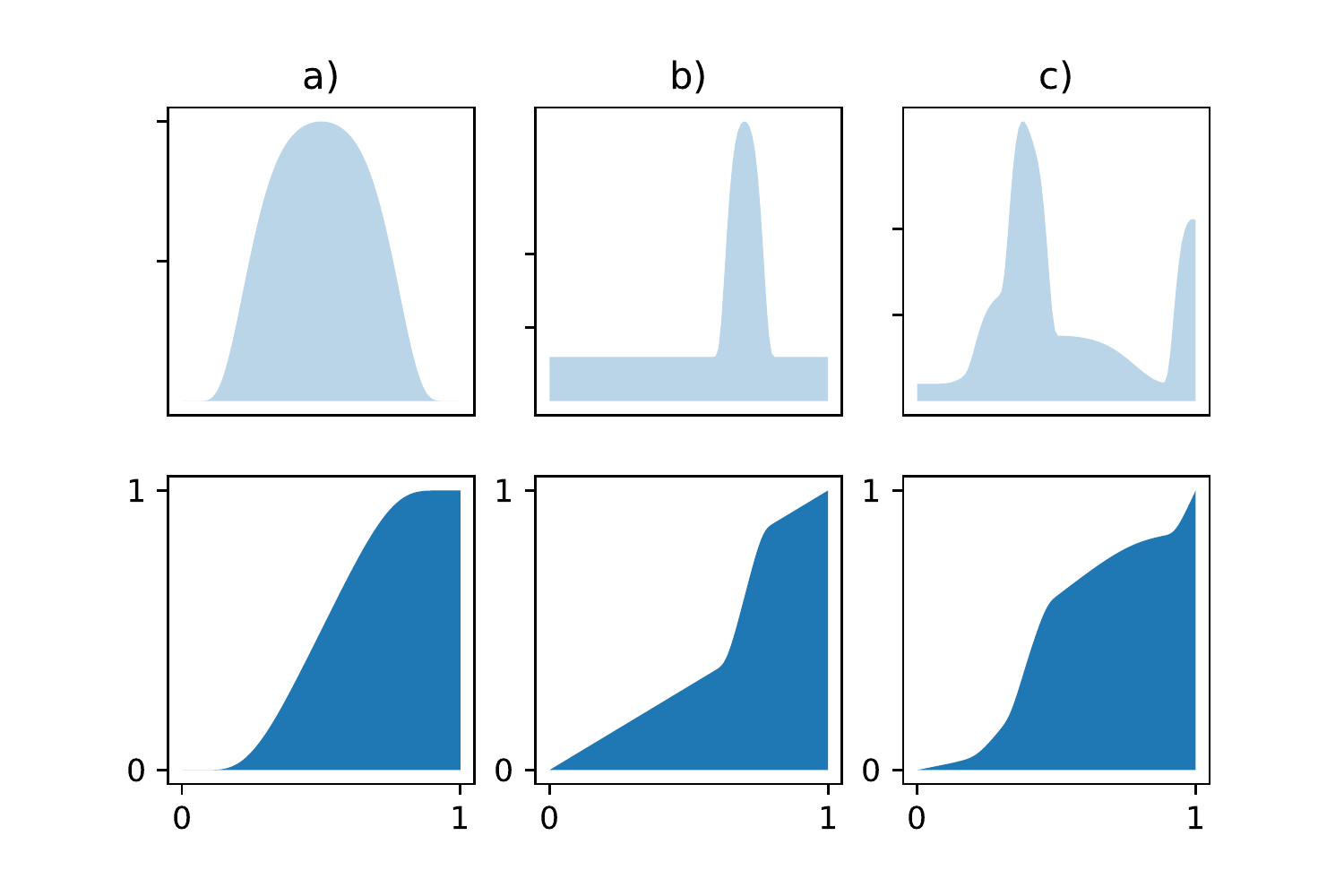}
    \includegraphics[width=0.49\textwidth]{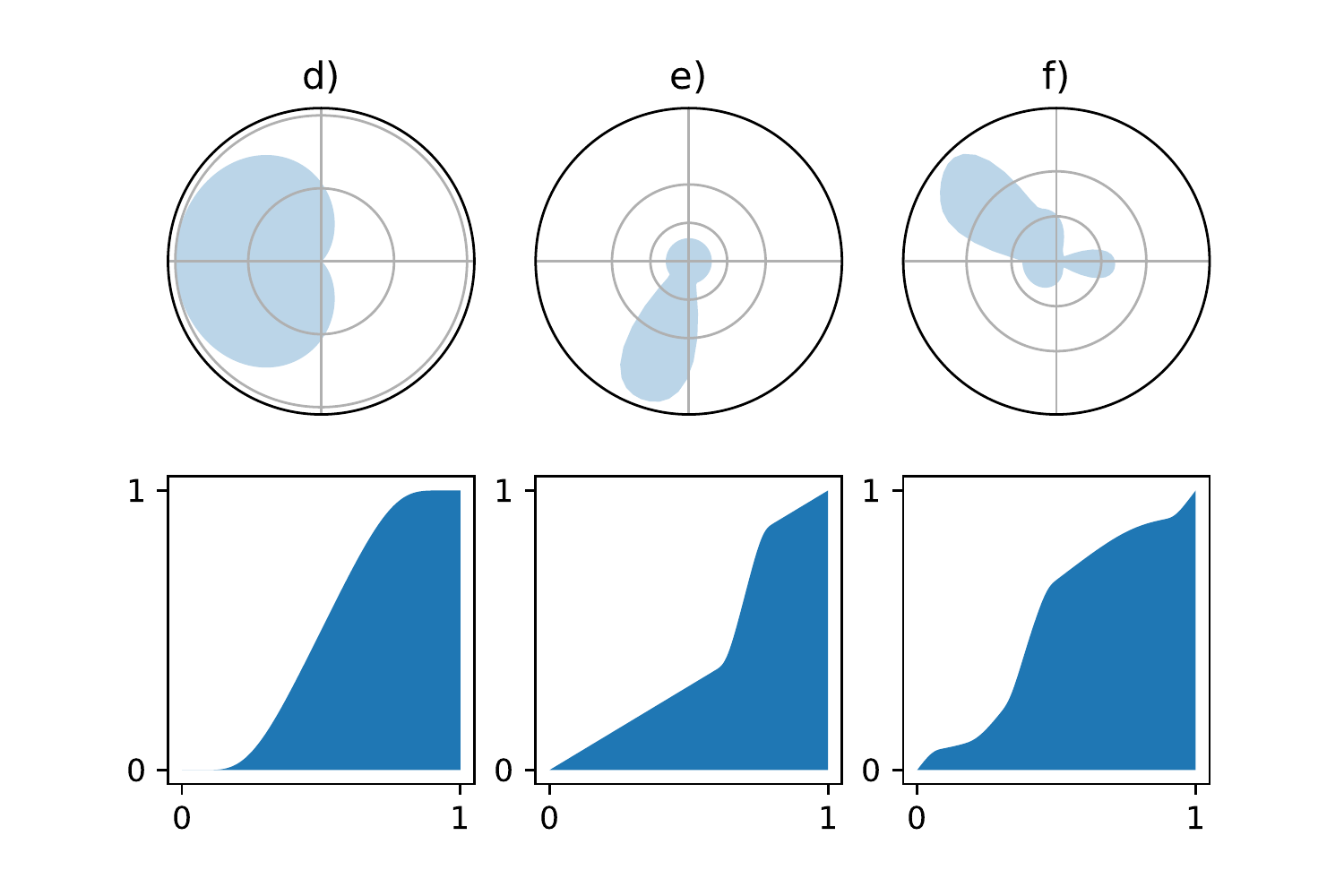}
    \caption{Construction of mixture transformations on compact intervals (left) and hypertori (right). The upper and lower row depict probability densities and cumulative distribution functions, respectively. Multiple unimodal bump functions [a) and d)] are added to a small but finite density [b) and e)] and combined to yield bijective multimodal transformations [c) and f)].}
    \label{fig:schema}
\end{figure}

\section{Optimizing non-analytic inverse flows}
\label{sec:black-box-inverse-optimization}

A drawback of mixture-based flows is their lack of an analytic inverse. Prior work such as \cite{ho2019flow++, rezende2020normalizing} suggested to rely on black-box inversion methods like a bisection search to sample from trained models. 
While this leads to accurate samples, the discrete nature of such black-box oracles does not allow to minimize $\mathcal{L}(\bm \theta)$ as defined in Eq. \eqref{eq:loss}.

A possible remedy to this can be derived using the \textit{inverse function theorem}. Let $f(\cdot; \bm\theta) \colon \Omega \subset \mathbb{R} \rightarrow \Omega$ be a scalar diffeomorphism and let furthermore $ x = f^{-1}(y; \bm\theta)$ be obtained via a black-box inversion algorithm. To minimize losses depending on $x(y; \bm \theta)$ or $\log |\partial_{y} x(y; \bm \theta)|$ we need to compute gradients with respect to $y$ and $\bm \theta$. Here we derive the following relations for the derivatives (all details in supplementary material):
\begin{align}
    \partial_{y} x(y; \bm \theta) &= \left( \partial_{x} f( x; \bm \theta) \right)^{-1} \label{eq:bp1}\\
    \partial_{\bm \theta} x(y; \bm \theta) &= -\left(\partial_{x} f( x; \bm \theta)\right)^{-1} \partial_{\bm\theta} f( x; \bm \theta)\\
    \partial_{y} \log \left| \partial_{y} x(y; \bm \theta) \right| &= -\left(\partial_{x} f( x; \bm \theta)\right)^{-1} \log \left| \partial_{x} f( x; \bm \theta) \right|\\
    \partial_{\bm \theta} \log \left| \partial_{y} x(y; \bm \theta) \right| &= -\left(\partial_{x} f( x; \bm \theta)\right)^{-1} \left( \log \left| \partial_{x} f( x; \bm \theta) \right| \partial_{\bm \theta} f( x; \bm \theta) - \partial_{\bm \theta}\partial_{x} f( x; \bm \theta) \right) \label{eq:bp4}
\end{align}
We remark that \eqref{eq:bp1} corresponds to the \emph{implicit reparameterization gradient} as introduced in \citet{figurnov2018implicit}. Using these rules, we can first compute $x$ via a black-box oracle for any input $y$ and then compute all necessary gradients using forward evaluations and derivatives of $f(\cdot; \bm\theta)$. 
Derivatives of $f(\cdot; \bm\theta)$ are usually accessible for computing the log Jacobian of the transformation. We can obtain higher-order derivatives using automatic differentiation.
It is easy to see that this construction extends to multivariate diffeomorphisms with diagonal Jacobian as used in coupling layers.
It can be generalized to arbitrary higher order derivatives using the Faà di Bruno formula, e.g. when aiming to do force matching through a black-box inversion algorithm. As our experiments only require losses involving second order derivatives for the inverse direction we leave this for future work.
Note that Equations \eqref{eq:bp1} - \eqref{eq:bp4} can be applied to any other element-wise flow transformation that lacks an analytic inverse and is not tied to mixture models.

Another problem of bisection is its sequential execution which can become prohibitively slow during training.
E.g. achieving an error of 10$^{-6}$ requires around 20 iterations. 
To obtain speedup on GPUs we suggest to generalize the classic binary search to searching in a grid of $K$ bins simultaneoulsy (see details in Supp. Mat.). For $K=2$ it results in the usual bisection method. However, by leveraging vectorized execution we can reduce the number of iterations by a factor $O\left(1 / \log K\right)$ at the expense of increasing memory by a factor $O\left(K\right)$. Practical speedup depends on the actual number of parallel workers and thus depending on dimensionality, batch size and number of mixture components the optimal choice of $K$ varies.

%



\section{Experiments}
The numerical experiments in this work are tailored to show the benefits of smooth flows over non-smooth flows. Therefore, we mainly compare mixtures of bump functions to neural spline flows \cite{durkan2019neural}, which exhibit state-of-the-art generative performance but whose densities are only first-order continuously differentiable.

\subsection{Illustrative Toy Example}
\begin{figure}[htbp]
    \centering
    \includegraphics[width=0.99\textwidth]{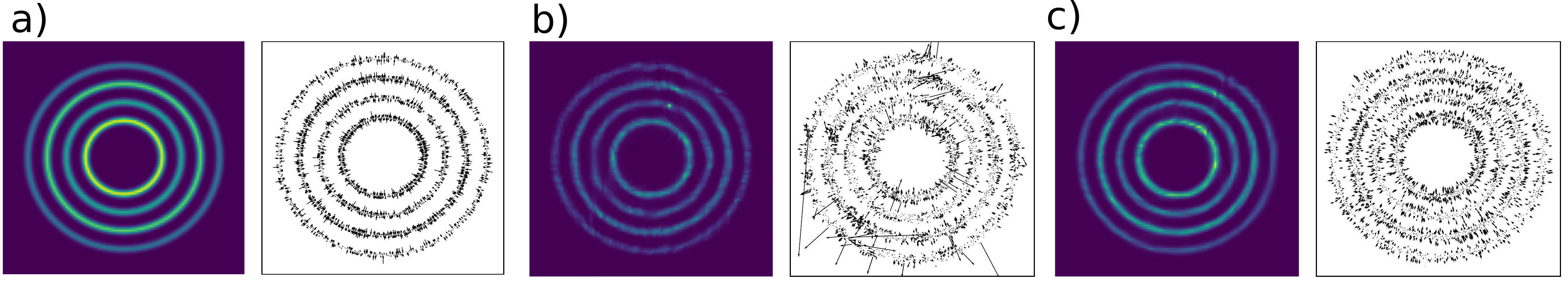}
    \caption{a) Reference density and corresponding force field as approximated by b) a $C^{1}$-smooth NSF and c) a $C^{\infty}$-smooth flow using the mixture of bump functions introduced in Sec. \ref{sec:bump-functions}.}
    \label{fig:toy}
\end{figure}
To highlight the difference in terms of smoothness, the two flow architectures were applied to a two-dimensional toy example. Fig. \ref{fig:toy} a) depicts the reference energy and forces on samples from the ground-truth distribution. Both flows were trained through density estimations on those samples (see SI for details). The smooth flows and spline flows matched the density well, which demonstrates their expressivity. However, the force field of the spline flow contained dramatic outliers, while the forces of the smooth flow matched the regular behavior of the reference forces.

\subsection{Runtime Comparison}

While the rational-quadratic splines are analytically invertible, mixture transformations obviously increase computational cost of the inversion due to the iterative root-finding procedure. To quantify this gap in performance, the inverse evaluation of a spline flow was compared with a modified version, where the analytic inverse was replaced by the multi-bin bisection from Section \ref{sec:black-box-inverse-optimization}. The performance of the bisection was evaluated for different numbers of bins $K=2^m,\ m=1,\dots,8,$ and the most efficient $K$ was picked for each input size (``dim''). Both transformations were coupled to a conditioner with the same input size (dim) and 64 hidden neurons. 

For small tensor dimensions (2-32), the optimal multi-bin bisection employed up to 256 bins on the GPU, which resulted in only a factor of 2-3 slowdown compared to analytic inversion. For larger dimensions (2048), the parallelization over multiple bins became less effective, leading to one order of magnitude difference in computational cost. The compute times and optimal bin sizes were comparable for the inverse network pass and its backward-pass. More details on these runtime comparisons can be found in the supplementary material.


\subsection{Smooth Flows Enable Boltzmann Generator Training through Force Matching}
\label{sec:smooth_fm}
To demonstrate the advantages of smooth flows, we train a Boltzmann generator (BG) for a small molecule, alanine dipeptide, which is described in the supplementary material. 
This is a common test case for molecular computations and was previously considered for sampling with normalizing flows in \cite{wu2020snf, dibak2020temperature, kramer2020training}. The molecular system has 60 degrees of freedom (global rotation and translation excluded). Its potential energy surface is highly sensitive to small displacements \cite{kramer2020training} and contains singularities. 

\label{sec:fm}
\begin{figure}[htbp]
    \centering
    \includegraphics[width=0.99\linewidth]{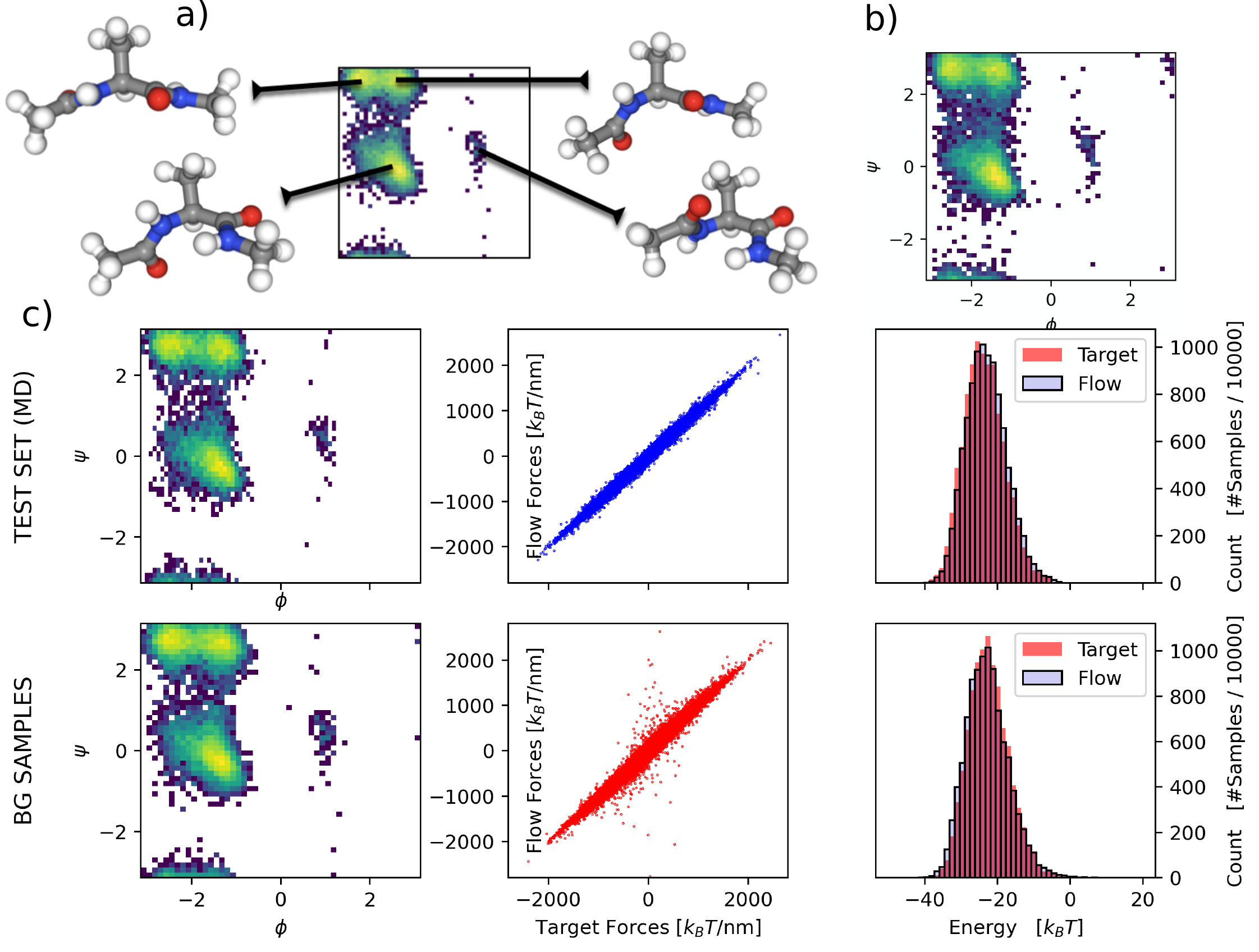}
    \caption{a) Generated sample structures. b) Torsion distribution after training through root-finding. c) Smooth normalizing flow trained on alanine dipeptide through a combination of force matching and density estimation. The top and bottom row show the performance on 10,000 samples each from the holdout test set and from the flow, respectively. Left: joint distribution of backbone torsion angles. 
    Center: scatter plot of flow vs. target force components.
    Right: energy histograms for the flow energy and the target energy. (Flow energies were shifted by a constant so that the minimum energy matched with the minimum energy from the molecular potential). 
    }
    \label{fig:ala2_bg}
    \label{fig:ala2_samples}
    \label{fig:invbg}
\end{figure}
    

In previous work \cite{wu2020snf, dibak2020temperature, kramer2020training}, BGs for alanine dipeptide used stochastic augmentation of the base space. The introduction of noise variables facilitates creating expressive normalizing flows but prevents computation of deterministic forces. Some have also used spline flows to sample molecular configurations \cite{wirnsberger2020targeted, ding2021deepbar}.

As a proof-of-concept for the force matching loss, we trained smooth flows for alanine dipeptide using a 1000:1 weighting between the force matching residual and the negative log likelihood, 
see supplementary material
for details on the flow and training setup. No stochastic augmentation or energy-based training was used and no reweighting was conducted for postprocessing.

Figure \ref{fig:ala2_bg} c) compares the flow distribution with the target Boltzmann distribution on the test data from MD and on flow samples. The left column shows that the flow has almost perfectly learned the nontrivial joint distribution between the two backbone torsion angles $\phi$ and $\psi$, including the sparsely populated metastable region around $\phi\approx 1.$ In contrast to previous work that used affine coupling layers \cite{wu2020snf}, the modes are cleanly separated, see also Fig. \ref{fig:affine_bg} for a direct comparison. 
The center column compares flows forces with target forces. Those forces are highly sensitive to small perturbations of the molecular configurations. Nevertheless, they matched up to a root-mean square deviation of 25 and 46 $k_B T$/nm in the target ensemble and the generated ensemble, respectively. Note that the training was stopped after 10 epochs, where the validation force matching residual had not yet fully converged, so that even further improvements may be possible with longer training or extensive hyperparameter optimization. 
A more detailed analysis of the forces and sampling efficiency of the BGs is shown in supplementary material, Figure \ref{fig:forces_se}. Smooth flows trained by a combination of density estimation and FM attain a favorable sampling efficiency of 42\%, compared to 25\% and $<1$\% for spline and RealNVP flows, respectively.

Finally, the right column of Figure \ref{fig:ala2_bg} c) depicts the distribution of flow and target energies evaluated on the same samples. The flow energies were shifted so that the minimum energy matched with the molecular mechanics energy. It has not escaped our notice that this constant offset (189 $k_B T$ on the test set and 190 $k_B T$ on the flow samples) corresponds to the log partition function, a quantity whose intractability has complicated research in statistical mechanics for decades. 


The energy distribution of the flow tracked the ground truth exponential distribution almost perfectly even though no target energies or forces were evaluated on the flow samples during training. This close-to-perfect match demonstrates that including force information into the training process presents an efficient means for regularization. 
Figure \ref{fig:ala2_samples} a) shows representative conformations generated by the BG, which confirm the high quality of the molecular structures.

\subsection{Boltzmann Generator Training through Black-Box Root-Finding}

\begin{table}[htbp]
    \centering
    \caption{Test set metrics of alanine dipeptide training with different losses: negative log likelihood (NLL), force matching error (FME), and reverse Kullback-Leibler divergence (KLD). Weighted loss functions were used as defined in Eq. \eqref{eq:loss} with weight factors $\omega_k,$ $\omega_{\mathrm{f}}$ and $\omega_n = 1-\omega_k-\omega_{\mathrm{f}}.$ Metrics were recorded after 10 training epochs. Statistics are means and $2\times$ standard errors over 10 replicas for each experiment. The lowest value with respect to each metric is highlighted in bold type.
    } 
    \label{tab:metrics}
    \begin{tabular}{lccccc}
    \toprule
     &   &
     $\omega_{k}=0$ & $\omega_{k}=0$ & $\omega_{k}=0.1$ & $\omega_{k}=0.1$ \\
     Metric & Method & $\omega_{\mathrm{f}}=0$ & $\omega_{\mathrm{f}}=0.001$ & $\omega_{\mathrm{f}}=0$ & $\omega_{\mathrm{f}}=0.001$ \\
    \midrule
        \multirow{4}{*}{NLL} & \multirow{2}{*}{spline} & -210.32  & - & -196.40  & 48.13    \\
        & & ($\pm$0.16) & & ($\pm$ 1.53) & ($\pm$ 95.37) \\
        \cmidrule(lr){2-6}
           & \multirow{2}{*}{smooth} & -211.04 & \textbf{-211.40} & -206.33  & -208.70  \\
        & & ($\pm$ 0.09) &  ($\pm$ 0.05) & ($\pm 2.04$) & ($\pm$ 1.93)  \\
        \cmidrule(lr){2-6}
        \multirow{4}{*}{FME $\times 10^4$}  & \multirow{2}{*}{spline} & 12.13  & - &  313.64  & 1498.98 \\
        & & ($\pm$ 4.94) & & ($\pm$ 107.92) & ($\pm$ 1935.72)  \\
        \cmidrule(lr){2-6}
        & \multirow{2}{*}{smooth} & 1.04  & \textbf{0.32}  & 5.80  & 0.48   \\
        &&($\pm$ 0.08)& ($\pm$ 0.02)& ($\pm$ 2.30) & ($\pm$ 0.05)\\
        \cmidrule(lr){2-6}
        \multirow{4}{*}{KLD} & \multirow{2}{*}{spline}& 263.77 & - & 230.08  & 1205.69 \\
        &&  ($\pm$ 3.21) & & ($\pm$ 12.38) & ($\pm$ 36.25) \\
        \cmidrule(lr){2-6}
            & \multirow{2}{*}{smooth} & \textbf{193.91}  & 195.17  & 219.03  & 207.81 \\
        && ($\pm$ 0.39) & ($\pm$ 0.43) & ($\pm$ 22.16) & ($\pm$ 11.64)\\
    \bottomrule
    \end{tabular}
\end{table}

Two experiments were conducted to demonstrate that flows can be trained through a black-box root-finding algorithm.

First, the BG from section \ref{sec:fm} was trained with all smooth transformations operating in the inverse direction. Consequently, inverse problems had to be solved when computing the negative log likelihood with respect to data, while the sampling occurred analytically. Figure \ref{fig:invbg} b) shows the joint marginal distribution of the backbone torsions on the BG samples. The distribution matches the data and the smooth flows that were trained in forward mode (see Fig. \ref{fig:ala2_bg}). This result shows that training with gradients computed from the inverse function theorem is feasible.

Second, flows were trained by different combinations of density estimation, force matching, and energy-based training, where computing $\partial_{\bm{\theta}}\mathcal{L}_{\mathrm{KLD}}$ requires the gradients of the inverse flow.
The training of spline flows with inclusion of the force matching error ($\omega_{\mathrm{f}}>0$) was notoriously unstable. This led to 10/10 and 7/10 failed runs for $\omega_{k}=0$ and $\omega_{k}=0.1,$ respectively. This is understandable from the discontinuity of the spline forces. Even the combination of NLL and KLD led to instabilities for 2/10 runs. (Gradients were not clipped during training to enable a mostly unbiased comparison). In contrast, all training runs with our smooth flows concluded successfully. 
 
Table \ref{tab:metrics} shows the metrics on the test set after 10 training epochs using spline flows and smooth flows (see SI for specifics about the flow architecture and training).
With pure density estimation ($\omega_k=\omega_{\mathrm{f}}=0$), both architectures achieved similar NLL. However, the smooth flow achieved much lower FME and KLD.
The NLL and FME were further improved when force data was presented to the flow during training. Including reverse KLD in the loss yielded reasonable metrics for the smooth flows, indicating that the backpropagation through root-finding was numerically stable. However, the metrics were consistently worse than with pure NLL training for both types of flows (spline and smooth) and were subject to large fluctuations. In contrast, including the FME proved to be a more stable approach to include information about the target energy into the training.

\subsection{Smooth Flows as Potentials in Molecular Dynamics}
\begin{figure}[htbp]
    \centering
    \includegraphics[width=0.9\linewidth]{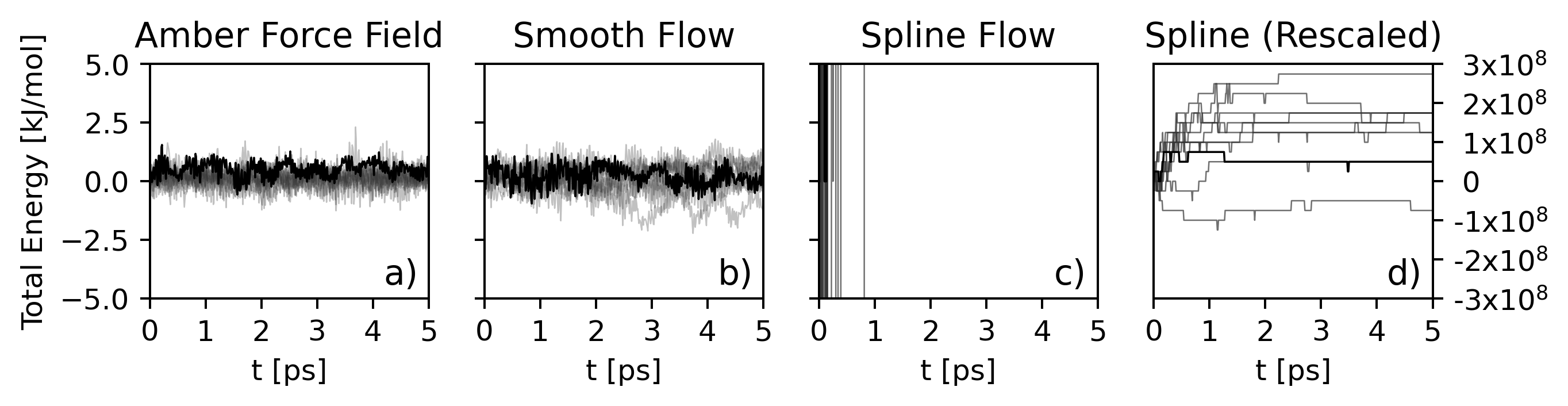}
    \caption{Total energy fluctuation in simulations with a classical force field and with two flows that employed smooth transforms and neural spline transforms, respectively. For each potential, 10 simulations were run starting from 10 different initial configurations (grey color). One run each is highlighted in black. Subfigures c) and d) show identical data on different scales.}
    \label{fig:md_energies}
\end{figure}

A challenging test for the smoothness of a flow is its use as a potential energy in a dynamics simulation. Especially simulations in the microcanonical ensemble are extremely sensitive to numerical errors as the symplectic integration does not impose any additional stabilizing mechanism on the total energy. This means that any inconsistencies between energies and forces on the scale of one integration step easily cause drifting, strongly fluctuating, or exploding energies. 

Therefore, we ran MD simulations using the flow energy $u_\theta = -\log p_{f}(\cdot; \bm{\theta}),$ from Section \ref{sec:fm} as the potential. For comparison, simulations were also run with spline flows that were trained purely by density estimation. Simulations started from stable structures of an MD simulation and were equilibrated in each potential for 1 ps using a Langevin thermostat with 10/ps friction coefficient. 

Figure \ref{fig:md_energies} depicts the evolution of the total energy over 5 ps simulations in the microcanonical ensemble. As expected, the classical simulation kept the total energy roughly constant within a standard deviation of $6\times 10^{-4}$ kJ/mol per degree of freedom. Astoundingly, the smooth flow potentials also maintained the energies within $8\times 10^{-4}$ kJ/mol per degree of freedom using the same 1 fs time step. In contrast, the simulations with spline flow potentials quickly fell apart with potential and kinetic energies growing out of bounds. Those instabilities persisted even with an order-of-magnitude smaller time step (not shown).

While the infeasibility of spline flows for this task was expected, the competitive behavior of smooth flows with molecular mechanics force fields that were tailored for dynamics highlights their regularity and the consistency of their forces. 


\section{Discussion}

\paragraph{Limitations and outlook}
Despite the promising results we mention the following possible limitations of the method in the present state and discuss possible improvements for future work:

    First, smooth flows as implemented in this work impose numerical overhead in comparison to non-smooth alternatives such as spline flows. While this can be considered a question of concrete engineering it also opens search for more efficient smooth alternatives.
    
    In addition, the differentiation rules for black-box inverses provide correct algebraic gradients. Yet for considerably multi-modal distributions maintaining numerical stability can become a challenge. Future work should thus focus on studying and improving numerical stability of the bidirectional training.
    
    While potentials can be trained up to an accuracy where they allow for energy-conserving simulations, it is yet to show that they also provide accurate equilibrium distribution for long-run simulations. At this point a major bottleneck is the inferior evaluation performance per step of a flow-based force-field compared to common force-field implementations.
    
    Furthermore, internal coordinates are a very efficient representation space for smaller poly-peptides. However, they are difficult to scale to large proteins, protein complexes or systems with non-trivial topologies, e.g. proteins with disulfide bonds. Integrating recent advances in graph-based molecular representations with force-matched flows will likely be required for a succesful modeling of such systems.
    
    Finally, training with forces improves the resulting potential compared to just training with samples. However, for many MD data forces have not been stored and recomputing them requires a significant computational overhead. 

\paragraph{Conclusion}
    
We introduced a range of contributions which can improve upcoming work on applying flows to physical problems.

The proposed approach for backpropagation through black-box root-finders can help to obviate the search for analytically invertible transformations if bi-directional training is necessary. As we show the method works especially well for relatively low-dimensional problems. Scaling the approach, generalizing it to non-diagonal Jacobians and improving its numerics are interesting questions for future work.

The MD simulation example shows that $C^{\infty}$-smooth flows on non-trivial topologies can open new avenues of research as well as new applications for flows. By carefully respecting the topological domain as well as the smoothness of the target potential we can approximate the equilibrium density of a small peptide nearly perfectly.
Finally, we showed that incorporating force information together with MLE outperforms the state-of-the-art approach of combining MLE with minimizing of the reverse KL divergence. It does not suffer from mode-collapse, yet includes information of the target potential which is required to achieve low-energy samples. 


Combined, these results could improve methods used for learning the mean potential surface of coarse-grained molecules, pave the way to multi-scale flows for modeling large protein systems and eventually help to accelerate simulations and sampling.

\newpage
\section*{Acknowledgements}
We thank the anonymous reviewers at NeurIPS 2021 for their insightful comments. Special thanks to Yaoyi Chen for setting up the alanine dipeptide system and Manuel Dibak, Leon Klein, Oana-Iuliana Popescu, Leon Sixt, and Michael Figurnov for helpful discussions and feedback on the manuscript and pointers to related work. We acknowledge funding from the European Commission (ERC CoG 772230 ScaleCell), Deutsche Forschungsgemeinschaft (GRK DAEDALUS, SFB1114/A04), the German Ministry for Education and Research (Berlin Institute for the Foundations of Learning and Data BIFOLD), and the Berlin Mathematics center MATH+ (Project AA1-6 and EF1-2).

\bibliography{references}

\newpage
\setcounter{section}{0}
\setcounter{table}{0}
\setcounter{figure}{0}
\renewcommand\thesection{SI-\arabic{section}}
\renewcommand\thetable{SI-\arabic{table}}
\renewcommand\thefigure{SI-\arabic{figure}}

\newpage
\pagenumbering{roman}
\section*{SUPPLEMENTARY INFORMATION\\ "Smooth Normalizing Flows"}
\section{Proofs}
\subsection{Smooth Bump Functions}

We first show that the smooth ramp as defined in Sec. \ref{sec:bump-functions} is indeed as smooth as promised. 

\begin{lemma}
\label{lemma:smooth-ramp}
Let $\alpha > 0$ and $\beta \geq 1$. The function given by 
\begin{align}
    \rho(x) := \begin{cases}
        \exp\left(-\frac{1}{\alpha\cdot x^\beta}\right) & x > 0 \\
        0 & \text{o.w.}
    \end{cases}
\end{align}
is smooth. Furthermore, for all $n \in \mathbb{N}$ we have $\lim_{x\rightarrow 0} \partial_{x}^{(n)} \rho(x) \rightarrow 0$.
\end{lemma}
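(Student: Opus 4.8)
The plan is the classical bump-function argument. On $(-\infty,0)$ the function $\rho$ is identically zero, hence $C^{\infty}$; on $(0,\infty)$ it is the composition of $\exp$ with the smooth map $x\mapsto -1/(\alpha x^{\beta})$, hence $C^{\infty}$. So all the work is at $x=0$, where I would prove by induction that every derivative exists and vanishes.

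First I would isolate the single analytic fact that drives everything: for every real $M\ge 0$,
\[
\lim_{x\to 0^{+}} x^{-M}\exp\!\left(-\tfrac{1}{\alpha x^{\beta}}\right)=0 .
\]
Substituting $t=1/x$ turns this into $\lim_{t\to\infty} t^{M}\exp(-t^{\beta}/\alpha)$, which is $0$ because $\beta\ge 1$ and the exponential dominates any power (e.g.\ $e^{u}\ge u^{k}/k!$ with $k>M/\beta$). Next I would set up the algebra of the derivatives on $(0,\infty)$: since $\rho'(x)=\tfrac{\beta}{\alpha}x^{-\beta-1}\rho(x)$ there, an easy induction gives $\rho^{(n)}(x)=q_{n}(x)\,\rho(x)$ with $q_{0}=1$ and $q_{n+1}=\tfrac{\beta}{\alpha}x^{-\beta-1}q_{n}+q_{n}'$. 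To control the $q_{n}$ uniformly I would let $\mathcal{A}$ be the set of finite real-linear combinations of monomials $x\mapsto x^{-s}$ with $s$ ranging over the additive monoid $S=\{\,i+j\beta:\ i,j\in\mathbb{N}_{0}\,\}$. One checks that $\mathcal{A}$ contains the constants, is closed under differentiation (because $\tfrac{d}{dx}x^{-s}=-s\,x^{-s-1}$ and $s+1\in S$) and under multiplication by any monomial $x^{-s}$ with $s\in S$, and that every $q\in\mathcal{A}$ obeys $|q(x)|\le C\,x^{-M}$ on $(0,1]$ for suitable $C,M$. Hence $q_{n}\in\mathcal{A}$ for all $n$, and using the monoid $S$ rather than $\mathbb{N}_{0}$ this works verbatim for non-integer $\beta$.

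Finally I would run the induction at $0$, with hypothesis: $\rho$ is $C^{n}$, $\rho^{(n)}\equiv 0$ on $(-\infty,0]$, and $\rho^{(n)}=q_{n}\rho$ on $(0,\infty)$. The base case $n=0$ is continuity of $\rho$ at $0$, immediate from the displayed limit. For the inductive step: for $x>0$ we have $\rho^{(n+1)}(x)=q_{n+1}(x)\rho(x)$ with $q_{n+1}\in\mathcal{A}$, so $\lim_{x\to 0^{+}}\rho^{(n+1)}(x)=0$ by the bound on $q_{n+1}$ together with the displayed limit; for $x<0$ it is $0$; and at $0$ the difference quotient of $\rho^{(n)}$ vanishes from the left, while from the right $\tfrac{\rho^{(n)}(h)}{h}=\tfrac{q_{n}(h)}{h}\,\rho(h)\to 0$ since $h\mapsto q_{n}(h)/h$ again lies in $\mathcal{A}$. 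Thus $\rho^{(n+1)}(0)=0$ and $\rho^{(n+1)}$ is continuous on $\mathbb{R}$, i.e.\ $\rho\in C^{n+1}$, which also gives $\lim_{x\to 0}\partial_{x}^{(n)}\rho(x)=0$ for every $n$.

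There is no deep obstacle here; the content is the decay estimate plus a careful induction. The only points that need genuine care are phrasing the ``polynomial in $1/x$'' structure via the monoid $S$ so that it covers non-integer $\beta$, and checking that the difference quotient $\rho^{(n)}(h)/h$ at $0$ reduces to (an element of $\mathcal{A}$)$\times\rho(h)$ so that the decay estimate applies again.
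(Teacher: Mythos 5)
Your proposal is correct and takes essentially the same route as the paper's proof: on $(0,\infty)$ each derivative is of the form (sum of negative powers of $x$) times $\rho(x)$, and the exponential decay of $\rho$ at $0^{+}$ annihilates any such prefactor. You are in fact slightly more careful than the paper --- your monoid $S$ makes the ``polynomial in $x^{-1}$'' structure literally correct for non-integer $\beta$, and you spell out the difference-quotient step establishing $\rho^{(n+1)}(0)=0$, which the paper leaves implicit.
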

\begin{proof}
    We follow classic text-book examples and give the proof for completeness. First note that the cases for $x > 0$ and $x < 0$ are trivial. We thus have to consider $x = 0$. We see that all left-sided derivatives vanish at $x=0$. Thus, it is sufficient to consider only right-sided derivatives. For $x > 0$ we have
    \begin{align}
        \partial_{x}\rho(x) = \alpha \beta \cdot x^{-(\beta+1)} \rho(x).
    \end{align}
    By induction, we can see that the $n$-th derivative is given by 
    \begin{align}
        \partial^{n}_{x}\rho(x) = \mathcal{P}\left(x^{-1}\right) \rho(x),
    \end{align}
    where $\mathcal{P}\left(x^{-1}\right)$ is some polynomial in $x^{-1}$. For any polynomial we have
    \begin{align}
        \lim_{y\rightarrow\infty} \frac{\mathcal{P}(y)}{\exp\left(\frac{y^{\beta}}{\alpha}\right)} = 0.
    \end{align}
    Thus, we end up with
    \begin{align}
        \lim_{x \rightarrow 0} \partial^{n}_{x}\rho(x) = 0.
    \end{align}
\end{proof}

Now with the following theorem, we see that the smooth bump functions satisfy all necessary conditions.
\begin{theorem}
\label{thm:smooth-sigmoid}
Let $\rho \colon [0,1] \rightarrow [0, 1]$ be a $n$-times differentiable and strictly monotonously increasing function with $\lim_{x \rightarrow 0} \rho(x)=0$ and $\lim_{x \rightarrow 1} \rho(x)=1$. Then the function
\begin{align}
    \sigma[\rho](x) := \frac{\rho(x)}{\rho(x) + \rho(1-x)}
\end{align}
is $n$-times differentiable, strictly monotonously increasing and satisfies
\begin{align}
    \sigma[\rho](0) &= 0, \\
    \sigma[\rho](1) &= 1.
\end{align}
If furthermore, $\lim_{x \rightarrow 0} \partial_{x}^{k} \rho(x) = 0$ for all $k \leq n$ we also have
\begin{align}
    \lim_{x \rightarrow 0} \partial^{n}_{x} \sigma[\rho](x) &= \lim_{x \rightarrow 1}  \partial^{n}_{x}  \sigma[\rho](x) = 0.
\end{align}
\end{theorem}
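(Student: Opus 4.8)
The plan is to treat the four assertions in turn, the last one --- vanishing of $\partial_x^n\sigma[\rho]$ at the two endpoints --- being the only one that needs the extra hypothesis on the derivatives of $\rho$, and being where the real work lies. Throughout I write $D(x):=\rho(x)+\rho(1-x)$, so that $\sigma[\rho]=\rho/D$, and I abbreviate $g:=\sigma[\rho]$. For well-definedness and regularity: since $\rho$ is strictly increasing with $\rho(0)=0$ it is non-negative on $[0,1]$, and at least one of $x,1-x$ is $\ge 1/2$, so $D(x)\ge\rho(1/2)>0$ on all of $[0,1]$. Hence $g$ is a quotient of $n$-times differentiable functions --- the map $x\mapsto\rho(1-x)$ being $n$-times differentiable by the chain rule --- with nowhere-vanishing denominator, so it is itself $n$-times differentiable on $[0,1]$. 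The boundary values are immediate: $g(0)=0/(0+1)=0$ and $g(1)=1/(1+0)=1$.

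For strict monotonicity I would avoid computing $g'$, since a strictly increasing differentiable $\rho$ need not satisfy $\rho'>0$ pointwise. Instead, for $x\in(0,1)$ I would write $g(x)=1/(1+r(x))$ with $r(x):=\rho(1-x)/\rho(x)$; here $x\mapsto\rho(1-x)$ is strictly decreasing and positive on $(0,1)$ while $x\mapsto\rho(x)$ is strictly increasing and positive there, so $r$ is strictly decreasing and $g$ strictly increasing on $(0,1)$. Together with $0=g(0)<g(x)<1=g(1)$ for all $x\in(0,1)$, this gives strict monotonicity on the closed interval $[0,1]$.

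The main step is the vanishing of the endpoint derivatives. First I would reduce the endpoint $x=1$ to $x=0$ via the reflection identity $g(1-x)=1-g(x)$, which on differentiating $n\ge1$ times gives $g^{(n)}(1-x)=(-1)^{n+1}g^{(n)}(x)$; letting $x\to0$ (so $1-x\to1$) then shows $\lim_{x\to1}g^{(n)}(x)=0$ once $\lim_{x\to0}g^{(n)}(x)=0$ is known. For the latter, note that on a small interval $[0,\delta]$ the function $h:=1/D$ is $C^n$: indeed $D(0)=1\neq0$, and --- using the convention that $C^n$ on a compact interval means a $C^n$ extension exists --- $\rho$, and hence $x\mapsto\rho(1-x)$, have continuous derivatives up to order $n$ near the endpoints, so $D$ and $h=1/D$ do as well; in particular $h,h',\dots,h^{(n)}$ are bounded on $[0,\delta/2]$. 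Applying the Leibniz rule to $g=\rho\cdot h$ gives
\begin{align}
    g^{(n)}(x)=\sum_{j=0}^{n}\binom{n}{j}\,\rho^{(j)}(x)\,h^{(n-j)}(x),
\end{align}
and by hypothesis $\rho^{(j)}(x)\to0$ as $x\to0$ for every $0\le j\le n$ while each $h^{(n-j)}(x)$ stays bounded, so the whole sum tends to $0$.

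I expect the only genuinely delicate point to be the regularity bookkeeping near the endpoints --- justifying that $x\mapsto\rho(1-x)$ and $1/D$ may be differentiated up to order $n$ there with controlled derivatives --- which is exactly where the $C^n$-extension convention of the main text is invoked (and where, for the concrete ramp functions of Lemma~\ref{lemma:smooth-ramp}, everything is in fact $C^\infty$). Everything else reduces to the quotient rule, the Leibniz rule, and elementary monotonicity.
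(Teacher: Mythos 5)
Your proof is correct and reaches the same conclusions, but it differs from the paper's argument in the two substantive steps. For strict monotonicity, the paper computes $\partial_x\sigma[\rho]$ by the quotient rule and reads off positivity of the numerator $(\partial_x\rho(x))\rho(1-x)+(\partial_x\rho(1-x))\rho(x)$; strictly speaking this only yields non-negativity (a strictly increasing differentiable $\rho$ may have $\rho'=0$ at isolated points), so your reformulation $\sigma[\rho]=1/(1+r)$ with $r(x)=\rho(1-x)/\rho(x)$ strictly decreasing is the cleaner and fully rigorous route. For the vanishing of the endpoint derivatives, the paper invokes the recursive Xenophontos formula for $\partial_x^n(u/v)$ and runs an induction on $n$, using that all lower-order derivatives of $\sigma[\rho]$ already vanish at $0$; you instead write $\sigma[\rho]=\rho\cdot(1/D)$ and apply the Leibniz rule once, killing every term by pairing a vanishing $\rho^{(j)}$ with a bounded derivative of $1/D$. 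This avoids the induction entirely and is simpler, at the price of having to justify that $1/D$ has bounded derivatives up to order $n$ near the endpoints --- but that regularity bookkeeping is exactly the step the paper also takes for granted when it asserts that the limits $C_k=(-1)^k\lim_{x\to1}\partial_x^k\rho(x)$ exist, so you are no less careful than the original, and you flag the issue explicitly via the $C^n$-extension convention. Both proofs share the reflection identity $\sigma[\rho](x)=1-\sigma[\rho](1-x)$ to transfer the result from $x\to0$ to $x\to1$.
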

\begin{proof}
    We quickly see that
    \begin{align}
        \lim_{x \rightarrow 0}  \sigma[\rho](x) 
        &= \lim_{x \rightarrow 0} \frac{\rho(x)}{\rho(x) + \rho(1-x)} \\
        &= \frac{\lim_{x \rightarrow 0} \rho(x)}{\lim_{x \rightarrow 0} \rho(x) + \lim_{x \rightarrow 0} \rho(1 - x)} \\
        &= \frac{0}{0 + 1} = 0.
    \end{align}
    Analogously, we can show $\lim_{x \rightarrow 1}  \sigma[\rho](x) = 1$.
    For the first derivative we compute
    \begin{align}
        \partial_{x} \sigma[\rho](x) &= \frac{\partial_{x} \rho(x)}{\rho(x) + \rho(1-x)} - \frac{\rho(x) (\partial_{x} \rho(x) - \partial_{x} \rho(1-x))}{(\rho(x) + \rho(1-x))^{2}} \\
        &= \frac{(\partial_{x} \rho(x)) \rho(1-x) + (\partial_{x} \rho(1-x)) \rho(x)}{(\rho(x) + \rho(1-x))^{2}} \label{eq:first-derivative-of-general-sigmoid},
    \end{align}
    from which we see that $\sigma[\rho]$ is strictly monotonously increasing within $(0, 1)$ and thus on all of $[0,1]$.
    As the denominator does not vanish in all of $[0, 1]$ and $\rho$ is $k$-times differentiable, we can conclude that $\sigma[\rho]$ is $k$-times differentiable as well.
    \\  ~ \\
    Now let $\lim_{x \rightarrow 0} \partial_{x}^{k} \rho(x) = 0$ for all $k<n$. We first note that
    \begin{align}
        C_{k} &:= \lim_{x \rightarrow 0} \partial_{x}^{k} (\rho(x) + \rho(1-x)) \\
        &= \lim_{x \rightarrow 0} \partial_{x}^{k} \rho(x) + (-1)^{k} \partial_{x}^{n} \rho(1-x)\\
        &= (-1)^{k} \lim_{x \rightarrow 1} \partial_{x}^{k} \rho(x),
    \end{align}
    exists.
    
    Then using the following recursive formula for the $n$-th derivative of the quotient of two functions given by \citet{xenophontos2007formula}
    \begin{align}
        \label{eq:xenophontos-formula}
        \partial^{n}_{x}\frac{u(x)}{v(x)} = \frac{1}{v(x)} \left[ \partial^{n}_{x} u(x) - n! \sum_{j=1}^{n} \frac{\partial^{n+1-j}_{x}v(x) \cdot \partial^{j-1}_{x} \frac{u(x)}{v(x)}}{(n+1-j)!(j-1)!} \right]
    \end{align}
    and setting $u(x) = \rho(x), v(x) = \rho(x) + \rho(1-x)$, we obtain
    \begin{align}
        \lim_{x \rightarrow 0} \partial^{n}_{x} \sigma[\rho](x) 
        &=  \lim_{x \rightarrow 0} \Bigg( \frac{1}{\rho(x) + \rho(1-x)} \\
        &\quad \cdot \left[ \partial^{n}_{x} \rho(x) - n! \sum_{j=1}^{n} \frac{\partial^{n+1-j}_{x}(\rho(x) + \rho(1-x)) \cdot \partial^{j-1}_{x} \sigma[\rho](x) }{(n+1-j)!(j-1)!} \right] \Bigg).
    \end{align}
    We will prove the rest of the theorem by induction. Assume that
    \begin{align}
        \lim_{x \rightarrow 0} \partial^{k}_{x} \sigma[\rho](x) = 0
    \end{align}
    for all $k < n$. As $\rho(x) + \rho(1-x) > 0$ for all $x \in [0, 1]$ and all limits exist we obtain
    \begin{align}
        \lim_{x \rightarrow 0} \partial^{n}_{x} \sigma[\rho](x) 
        &= \frac{1}{1} \left[0 - n! \sum_{j=1}^{n} \frac{C_{n+1-j}  }{(n+1-j)!(j-1)!} \cdot \lim_{x \rightarrow 0} \partial^{j-1}_{x} \sigma[\rho](x)\right] \\
        &= \frac{1}{1} \left[0 - n! \sum_{j=1}^{n} \frac{C_{n+1-j}  }{(n+1-j)!(j-1)!} \cdot 0\right] = 0, \\
    \end{align}
    which proves the induction step. For the base case we evaluate \eqref{eq:first-derivative-of-general-sigmoid} at $x=0$ which completes the proof. To show that also $\lim_{x \rightarrow 1}  \partial^{n}_{x}  \sigma[\rho](x) = 0$ we use that
    \begin{align}
        \sigma[\rho](x) &= \frac{\rho(x)}{\rho(1-x) + \rho(x)}\\
                        &= 1 - \frac{\rho(1-x)}{\rho(1-x) + \rho(x)} \\
                        &= 1 - \sigma[\rho](1-x) \\
    \end{align}
    from which we have for $n > 0$
    \begin{align}
        \lim_{x\rightarrow 1} \partial^{n}_{x} \sigma[\rho](x) 
        &= \lim_{x\rightarrow 0} \partial^{n}_{x} \sigma[\rho](1-x) \\
        &= \lim_{x\rightarrow 0} \partial^{n}_{x} (1-\sigma[\rho](x)) \\
        &= - \lim_{x\rightarrow 0} \partial^{n}_{x} \sigma[\rho](x) \\
        &= 0.
    \end{align}
\end{proof}

\subsection{Circular Wrapping}
Here we explain, how the former construction of $\sigma[\rho]$ can be used to define smooth bijections on the circle (and the hypertorus). We first explain the general recipe which works for any compactly supported smooth transformation and then how it is instantiated for the functions as defined in Sec. \ref{sec:bump-functions}.

\paragraph{General wrapping construction for smooth compact bump functions}
Let $p$  a smooth density with support on $[a, b]$, such that $b - a \leq 1$ and $[a, b] \cap [0, 1] \neq \emptyset$. Let furthermore $\partial^{n}_{x} p(a) = \partial^{n}_{x} p(b) = 0$ for all $n$. Then we can define a smooth density $\hat p$ on $[0,1]$ using the following cases:
\begin{itemize}
    \item For $[a, b] \subset [0,1]$ set
    \begin{align}
        \hat p(x) = \begin{cases}
        p(x) & x \in [a, b] \\
        0 & \mathrm{o.w.}
        \end{cases}
    \end{align}
    \item For $a < 0$ set
    \begin{align}
        \hat p(x) = \begin{cases}
        p(x) & x \in [0, b] \\
        p(x - 1) & x \in [1 + a, 1] \\
        0 & \mathrm{o.w.}
        \end{cases}
    \end{align}
    \item For $b > 1$ set
    \begin{align}
        \hat p(x) = \begin{cases}
        p(x) & x \in [a, 1] \\
        p(1+x) & x \in [0, b - 1] \\
        0 & \mathrm{o.w.}
        \end{cases}
    \end{align}
\end{itemize}
It is easy to see that $\partial^{n}_{x} \hat p(0) = \partial^{n}_{x} \hat p(1)$ for all $n$.
For some $c > 0$ we can now define the smooth non-vanishing density $\bar p$ on $[0,1]$ as
\begin{align}
    \bar p(x) = (1-c) \hat p(x) + c.
\end{align}
Then the function
\begin{align}
    f(x) = \int_{0}^{x} \bar p(z) ~ dz. \label{eq:smooth-circular-transformation-as-density-integral}
\end{align}
is a smooth bijection on $[0,1]$ with $\partial^{n}_{x} f(0) = \partial^{n}_{x} f(1)$. By this, we obtain a smooth bijection on $S^1$ with periodic derivative $\bar p$.

\paragraph{Instantiation for the smooth bump functions of Sec \ref{sec:bump-functions}}

As discussed in Sec. \ref{sec:bump-functions} we can choose $b \in [0,1]$ and $a \geq 1$ and set
\begin{align}
    g(x) = \sigma[\rho]\left(a (x - b) + \tfrac{1}{2}\right)
\end{align}
to obtain the smooth $[b - \tfrac{1}{2a}, b + \tfrac{1}{2a}]$-supported bump function $\partial_{x} g$. This satisfies the assumptions on $p$ of the former paragraph. Thus, we can set $p := \partial_{x} g$ which gives us the desired smooth transformation $f \colon S^1 \rightarrow S^1$. Note, that due to the piece-wise construction of $\bar p$, we can evaluate the integral \eqref{eq:smooth-circular-transformation-as-density-integral} in closed form.

\subsection{Differentiation through Blackbox Root-Finding}

\paragraph{Gradient relations}
Here we derive the gradient relations as discussed in Sec. \ref{sec:black-box-inverse-optimization}. This is a generalization of the usual inverse function theorem in 1D to the case of scalar bijections conditioned on some parameter $\bm \theta$. For reasons of brevity in the main text and precision in the proof, we used a different notation in Sec. \ref{sec:black-box-inverse-optimization} compared to the one we will use here. Specifically, we will denote the forward transform as $\alpha(\cdot; \bm \theta)$ while we denote the inverse transform as $\beta(\cdot; \bm \theta)$ for some parameters $\bm \theta \in \mathbb{R}^{d}$.
\begin{theorem}
\label{thm:backward-gradients}
Let $\Omega \subset \mathbb{R}$ open and
\begin{align*}
    \alpha \colon \Omega \times \mathbb{R}^{d} \rightarrow \Omega, \qquad
    \beta \colon \Omega \times \mathbb{R}^{d} \rightarrow \Omega
\end{align*}
be $\mathcal{C}^{2}$ functions such that for all $\bm \theta \in \mathbb{R}^{d}, x \in \Omega$  and $y := \alpha(x, \bm \theta)$
\begin{align} 
    \alpha(\beta(y; \bm \theta); \bm \theta) = y \qquad
    \beta(\alpha(x; \bm \theta); \bm \theta) = x \label{eq:local-inverse-rule}
\end{align}
and 
\begin{align}
    \partial_{x} \alpha(x; \bm \theta) > 0 \qquad 
    \partial_{y} \beta(y; \bm \theta) > 0. \label{eq:monotonicity-of-forward-and-inverse-function}
\end{align}
Define further
\begin{align}
    g_{\alpha}(x,\bm \theta) := \log \left| \frac{\partial \alpha(x; \bm \theta) }{\partial x } \right|, \quad g_{\beta}(y,\bm u) := \log \left| \frac{\partial \beta(y; \bm u) }{\partial y } \right|.
\end{align}
Then we have the following gradient relations:
\begin{align}
    \frac{\partial \beta(y; \bm \theta)}{\partial y} &= \left(\frac{\partial \alpha(x; \bm \theta)}{\partial x}\right)^{-1} \label{eq:grad-out-in} \\
    \frac{\partial \beta(y; \bm \theta)}{\partial \bm \theta} &= - \left(\frac{\partial \alpha(x; \bm \theta)}{\partial x}\right)^{-1} \frac{\partial \alpha(x; \bm \theta)}{\partial \bm \theta} \label{eq:grad-out-param} \\
    \frac{\partial g_{\beta}(y; \bm \theta)}{\partial y} &= - \left(\frac{\partial \alpha(x; \bm \theta)}{\partial x}\right)^{-1} \frac{\partial g_{\alpha}(x; \bm \theta)}{\partial x} \label{eq:grad-density-in} \\
    \frac{\partial g_{\beta}(y; \bm \theta)}{\partial \bm \theta} &= \left(\frac{\partial \alpha(y; \bm \theta)}{\partial x}\right)^{-1} \left(  \frac{\partial g_{\alpha}(x; \bm \theta)}{\partial x} \frac{\partial \alpha(x; \bm \theta)}{\partial \bm \theta} - \frac{\partial^{2} \alpha(x; \bm \theta)}{\partial \bm \theta \partial x} \right) \label{eq:grad-density-param}
\end{align}
\end{theorem}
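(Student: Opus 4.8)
The plan is to reduce everything to implicit differentiation of the two inverse identities in \eqref{eq:local-inverse-rule}, together with the observation that the log-Jacobians of the two maps are negatives of each other along the graph of $\beta$. Throughout I treat $x$ and $y$ as linked by $y = \alpha(x;\bm\theta)$, equivalently $x = \beta(y;\bm\theta)$, so a right-hand side written ``at $x$'' always means evaluated at $x=\beta(y;\bm\theta)$. The positivity hypotheses \eqref{eq:monotonicity-of-forward-and-inverse-function} let me drop every absolute value, which is exactly what makes the logarithms differentiable.

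\emph{Step 1 (relations \eqref{eq:grad-out-in} and \eqref{eq:grad-out-param}).} Differentiate the identity $\alpha(\beta(y;\bm\theta);\bm\theta)=y$. Applying $\partial_y$ and the chain rule gives $\partial_x\alpha(x;\bm\theta)\,\partial_y\beta(y;\bm\theta)=1$, which is \eqref{eq:grad-out-in} since $\partial_x\alpha>0$. Applying $\partial_{\bm\theta}$, now as a total derivative because $\beta$ also depends on $\bm\theta$, gives $\partial_x\alpha(x;\bm\theta)\,\partial_{\bm\theta}\beta(y;\bm\theta)+\partial_{\bm\theta}\alpha(x;\bm\theta)=0$, which rearranges to \eqref{eq:grad-out-param}. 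This is just the $1$D inverse function theorem carried through with a parameter.

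\emph{Step 2 (relations \eqref{eq:grad-density-in} and \eqref{eq:grad-density-param}).} From \eqref{eq:grad-out-in} and positivity, $g_\beta(y;\bm\theta)=\log\partial_y\beta(y;\bm\theta)=-\log\partial_x\alpha(\beta(y;\bm\theta);\bm\theta)=-g_\alpha(\beta(y;\bm\theta);\bm\theta)$. Differentiating this in $y$ and inserting \eqref{eq:grad-out-in} yields $\partial_y g_\beta(y;\bm\theta)=-\partial_x g_\alpha(x;\bm\theta)\,\partial_y\beta(y;\bm\theta)=-(\partial_x\alpha(x;\bm\theta))^{-1}\partial_x g_\alpha(x;\bm\theta)$, i.e. \eqref{eq:grad-density-in}. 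Differentiating the same identity in $\bm\theta$ (again a total derivative through $\beta$) gives $\partial_{\bm\theta}g_\beta=-\partial_x g_\alpha\,\partial_{\bm\theta}\beta-\partial_{\bm\theta}g_\alpha$; substituting \eqref{eq:grad-out-param} for $\partial_{\bm\theta}\beta$, using $\partial_{\bm\theta}g_\alpha=\partial_{\bm\theta}\log\partial_x\alpha=(\partial_x\alpha)^{-1}\partial_{\bm\theta}\partial_x\alpha$ (this is where the $\mathcal{C}^2$ hypothesis is needed, so the mixed partial exists), and collecting the common factor $(\partial_x\alpha)^{-1}$ produces \eqref{eq:grad-density-param}.

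Because $\alpha$ and $\beta$ are scalar-valued in their first argument, all of $\partial_x\alpha$, $\partial_x g_\alpha$, $\partial_y\beta$ are scalars, so multiplication order is irrelevant and no non-commutativity bookkeeping arises. The regularity assumption ($\mathcal{C}^2$) is used exactly twice: once so that $\beta$ is $\mathcal{C}^1$ and $g_\beta$ is differentiable (so the left-hand sides even make sense), and once so that $\partial_{\bm\theta}\partial_x\alpha$ in \eqref{eq:grad-density-param} exists. I do not expect a genuinely hard step; the one place where care is required — and where a sign slip would most naturally occur — is keeping the ``partial in $\bm\theta$ with $x$ held fixed'' distinct from the ``total in $\bm\theta$'' derivative when deriving \eqref{eq:grad-out-param} and \eqref{eq:grad-density-param}, i.e. consistently remembering that $x=\beta(y;\bm\theta)$ carries its own $\bm\theta$-dependence.
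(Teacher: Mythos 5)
Your proof is correct, and for the harder half it takes a genuinely shorter route than the paper. Relations \eqref{eq:grad-out-in} and \eqref{eq:grad-out-param} are obtained exactly as in the paper, by implicit differentiation of the inverse identity (the paper differentiates $\beta(\alpha(x;\bm\theta);\bm\theta)=x$, you differentiate $\alpha(\beta(y;\bm\theta);\bm\theta)=y$; this is immaterial). The difference lies in \eqref{eq:grad-density-in} and \eqref{eq:grad-density-param}: you exploit the identity $g_{\beta}(y;\bm\theta)=-g_{\alpha}(\beta(y;\bm\theta);\bm\theta)$, which follows immediately from \eqref{eq:grad-out-in} and positivity, and then just apply the chain rule in $y$ and (totally) in $\bm\theta$, re-inserting the already established relations \eqref{eq:grad-out-in}--\eqref{eq:grad-out-param} and $\partial_{\bm\theta}g_{\alpha}=(\partial_{x}\alpha)^{-1}\partial_{\bm\theta}\partial_{x}\alpha$. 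The paper instead differentiates the composed identity a second time, in $x$ and in $\bm\theta$, to first derive explicit formulas for $\partial^{2}_{yy}\beta$ and $\partial^{2}_{\bm\theta y}\beta$, and only then converts these to derivatives of $g_{\beta}$ via $\partial_{y}g_{\beta}=(\partial_{y}\beta)^{-1}\partial^{2}_{yy}\beta$ (and its $\bm\theta$-analogue). Your route avoids ever computing second derivatives of $\beta$ and is the more transparent argument; the paper's more mechanical computation has the side benefit of producing the second derivatives of the inverse as intermediate quantities. You also correctly flag where the $\mathcal{C}^{2}$ hypothesis enters (existence/symmetry of the mixed partial $\partial_{\bm\theta}\partial_{x}\alpha$ and differentiability of $g_{\beta}$), and your care in distinguishing the total from the partial $\bm\theta$-derivative is exactly the point at which the paper's own write-up drops a sign in an intermediate display (its final statements agree with yours).
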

\begin{proof}
    We have
    \begin{align}
        \frac{d}{d x} \beta(\alpha(x; \bm \theta); \bm \theta) = \frac{\partial \beta(\alpha(x; \bm \theta); \bm \theta)}{\partial y} \frac{\partial \alpha(x; \bm \theta)}{\partial x} = \frac{d x}{d x} = 1.
    \end{align}
    and thus
    \begin{align}
        \frac{\partial \beta(\alpha(x; \bm \theta); \bm \theta)}{\partial y}  &= \left( \frac{\partial \alpha(x; \bm \theta)}{\partial x} \right)^{-1}.
    \end{align}
    
    Now 
    \begin{align}
        \frac{d}{d \bm \theta} \beta(\alpha(x; \bm \theta); \bm \theta) = \frac{\partial \beta(\alpha(x; \bm \theta); \bm \theta)}{\partial y} \frac{\partial \alpha(x; \bm \theta)}{\partial \bm \theta} + \frac{\partial \beta(\alpha(x; \bm \theta); \bm \theta)}{\partial \bm \theta} = \frac{\partial x}{\partial \bm \theta} = 0.
    \end{align}
    which gives
    \begin{align}
        \frac{\partial \beta(\alpha(x; \bm \theta); \bm \theta)}{\partial \bm \theta} 
        &= \frac{\partial \beta(\alpha(x; \bm \theta); \bm \theta)}{\partial y} \frac{\partial \alpha(x; \bm \theta)}{\partial \bm \theta}\\
        &= \left( \frac{\partial \alpha(x; \bm \theta)}{\partial x} \right)^{-1} \frac{\partial \alpha(x; \bm \theta)}{\partial \bm \theta}
    \end{align}
    
    Combining
    \begin{align}
        \frac{d }{d x} \frac{d }{d x} \beta(\alpha(x; \bm \theta); \bm \theta) &= \frac{d }{d x} \frac{d x}{d x} = 0
    \end{align}
    and 
    \begin{align}
        \frac{d }{d x} \frac{d }{d x} \beta(\alpha(x; \bm \theta); \bm \theta) &= \frac{d}{dx} \frac{\partial \beta(\alpha(x; \bm \theta); \bm \theta)}{\partial y} \frac{\partial \alpha(x; \bm \theta)}{\partial x} \\
        &= \frac{\partial^{2} \beta(\alpha(x; \bm \theta); \bm \theta)}{\partial y\partial y} \left(\frac{\partial \alpha(x; \bm \theta)}{\partial x} \right)^{2} + \frac{\partial \beta(\alpha(x; \bm \theta); \bm \theta)}{\partial y} \frac{\partial^{2} \alpha(x; \bm \theta)}{\partial x \partial x} 
    \end{align}
    we obtain
    \begin{align}
    \frac{\partial^{2} \beta(\alpha(x; \bm \theta); \bm \theta)}{\partial y \partial y}
          &= - \left(\frac{\partial \alpha(x; \bm \theta)}{\partial x} \right)^{-2} \frac{\partial \beta(\alpha(x; \bm \theta); \bm \theta)}{\partial y} \frac{\partial^{2} \alpha(x; \bm \theta)}{\partial x \partial x} \\
          &= - \left(\frac{\partial \alpha(x; \bm \theta)}{\partial x} \right)^{-3} \frac{\partial^{2} \alpha(x; \bm \theta)}{\partial x \partial x}\\
          &= - \left(\frac{\partial \alpha(x; \bm \theta)}{\partial x} \right)^{-2} \frac{\partial}{\partial x} \log \left ( \frac{\partial \alpha(x; \bm \theta)}{\partial x}  \right)\\
          &= - \left(\frac{\partial \alpha(x; \bm \theta)}{\partial x} \right)^{-2} \frac{\partial g_{\alpha}(x; \bm \theta)}{\partial x}
    \end{align}
    and thus
    \begin{align}
        \frac{\partial}{\partial y} g_{\beta}(\alpha(x; \bm \theta); \bm \theta) 
        &= \left(\frac{\partial \beta(\alpha(x; \bm \theta); \bm \theta)}{\partial y} \right)^{-1} \frac{\partial^{2} \beta(\alpha(x; \bm \theta); \bm \theta)}{\partial y \partial y} \\
        &= - \left(\frac{\partial \alpha(x; \bm \theta)}{\partial x} \right)^{-1} \frac{\partial g_{\alpha}(x; \bm \theta)}{\partial x}
    \end{align}
    Finally, combining
    \begin{align}
        \frac{d }{d \bm \theta} \frac{d }{d x} \beta(\alpha(x; \bm \theta); \bm \theta)  &= \frac{d }{d \bm \theta} \frac{d x}{d x} = 0
    \end{align}
    and
    \begin{align}
        \frac{d }{d \bm \theta} \frac{d }{d x} \beta(\alpha(x; \bm \theta); \bm \theta) 
        &= \frac{d}{d\bm \theta} \frac{\partial \beta(\alpha(x; \bm \theta); \bm \theta)}{\partial y} \frac{\partial \alpha(x; \bm \theta)}{\partial x} \\
        &= \frac{\partial^{2} \beta(\alpha(x; \bm \theta); \bm \theta)}{\partial y\partial y} \frac{\partial \alpha(x; \bm \theta)}{\partial \bm \theta} \frac{\partial \alpha(x; \bm \theta)}{\partial x}  \\
        &\quad + \frac{\partial \beta(\alpha(x; \bm \theta); \bm \theta)}{\partial y} \frac{\partial^{2} \alpha(x; \bm \theta)}{\partial \bm \theta \partial x} \nonumber  \\
        &\quad + \frac{\partial^{2} \beta(\alpha(x; \bm \theta); \bm \theta)}{\partial \bm \theta  \partial y} \frac{\partial \alpha(x; \bm \theta)}{\partial x} \nonumber 
    \end{align}
    gives us
    \begin{align}
        \frac{\partial^{2}\beta(\alpha(x; \bm \theta); \bm \theta)}{\partial \bm \theta \partial y}
        &= - \frac{\partial^{2} \beta(\alpha(x; \bm \theta); \bm \theta)}{\partial y\partial y} \frac{\partial \alpha(x; \bm \theta)}{\partial \bm \theta} \\
        &\quad - \frac{\partial \beta(\alpha(x; \bm \theta); \bm \theta)}{\partial y} \frac{\partial^{2} \alpha(x; \bm \theta)}{\partial \bm \theta \partial x} \left(\frac{\partial \alpha(x; \bm \theta)}{\partial x}\right)^{-1} \nonumber \\
        &= \left(\frac{\partial \alpha(x; \bm \theta)}{\partial x} \right)^{-2} \frac{\partial g_{\alpha}(x; \bm \theta)}{\partial x} \frac{\partial \alpha(x; \bm \theta)}{\partial \bm \theta} \\
        &\quad - \left(\frac{\partial \alpha(x; \bm \theta)}{\partial x} \right)^{-2} \frac{\partial^{2} \alpha(x; \bm \theta)}{\partial \bm \theta \partial x}  \nonumber\\
        &= \left(\frac{\partial \alpha(x; \bm \theta)}{\partial x} \right)^{-2} \left(\frac{\partial g_{\alpha}(x; \bm \theta)}{\partial x} \frac{\partial \alpha(x; \bm \theta)}{\partial \bm \theta}  - \frac{\partial^{2} \alpha(x; \bm \theta)}{\partial \bm \theta \partial x} \right)
    \end{align}
    And thus we get
    \begin{align}
        \frac{\partial}{\partial \bm \theta} g_{\beta}(\alpha(x; \bm \theta); \bm \theta) 
        &= \left( \frac{\partial \beta(\alpha(x; \bm \theta); \bm \theta)}{\partial y}  \right)^{-1} \frac{\partial^{2}\beta(\alpha(x; \bm \theta); \bm \theta)}{\partial \bm \theta \partial y} \\
        &= \left(\frac{\partial \alpha(x; \bm \theta)}{\partial x} \right)^{-1} \left(\frac{\partial g_{\alpha}(x; \bm \theta)}{\partial x} \frac{\partial \alpha(x; \bm \theta)}{\partial \bm \theta}  - \frac{\partial^{2} \alpha(x; \bm \theta)}{\partial \bm \theta \partial x} \right)
    \end{align}
\end{proof}

\paragraph{Use in component-wise transformations}
Now we discuss component-wise transformations, i.e. \emph{transformers} of coupling layers. \footnote{Here we refer to the terminology introduced in \citet{huang2018neural}. We do \textbf{not} refer to permutation equivariant models leveraging dot-product attention or similar.} Let the multivariate transformations $\bm \alpha$, $\bm \beta$ factorize as
\begin{align}
    \bm \alpha(\bm x, \bm \theta)_{i} &= \alpha_{i}(x_i, \bm \theta) \\
    \bm \beta(\bm y, \bm \theta)_{i} &= \beta_{i}(y_i, \bm \theta)
\end{align}
for some component bijections $\alpha_i, \beta_i$, which is equivalent to $\partial_{\bm x} \bm \alpha$ and $\partial_{\bm y} \bm \beta$ being diagonal.

First, note that all component-wise transformations within automatic differentation graphs (e.g. component-wise applied \texttt{relu}, \texttt{exp} or \texttt{log} operations) only require the computation of component-wise derivatives in order to compute vector-jacobian products (VJP) with respect to output gradients $\bm v$.

As such, computing the terms $\bm v^T \partial_{\bm y} \beta(\bm y; \bm \theta)$ and $\bm v^T \partial_{\bm \theta} \beta(\bm y; \bm \theta)$ can be reduced to computing the component-wise VJPs $v_{i} \cdot \partial_{y_{i}}  \beta(y_i; \bm \theta)_{i}$ and $v_{i} \cdot \partial_{\bm \theta}  \beta(y_i; \bm \theta)_{i}$. If we have access to the Jacobian diagonal $\mathrm{diag}\left( \partial_{\bm x} \alpha(\bm x; \bm \theta)\right)$, this can be done using eqs. \eqref{eq:grad-out-in} \eqref{eq:grad-out-param} with one VJP call (see next paragraph for an implementation example).

For the log determinant of the jacobian we use
\begin{align}
    \log \det \partial_{\bm y} \beta(\bm y, \bm \theta) 
    &= \log \prod_{i} \mathrm{diag}(\partial_{\bm y} \beta(\bm y, \bm \theta))_{i} \\
    &=\sum_{i} \log \partial_{y_{i}} \beta(y_{i}, \bm \theta)\\
    &= \sum_{i} g_{\beta_{i}}(y_i, \bm \theta).
\end{align}
which reduces computing the VJPs $v \cdot \partial_{\bm y} \log \det \partial_{\bm y} \beta(\bm y, \bm \theta)$, $v \cdot \partial_{\bm \theta} \log \det \partial_{\bm y} \beta(\bm y, \bm \theta)$ to computing the component-wise VJPs $v \cdot \partial_{\bm y_{i}} g_{i}(y_{i}, \bm \theta)$, $v \cdot \partial_{\bm \theta} g_{i}(y_{i}, \bm \theta)$. Similarly as before, this can be done with two VJP calls if we have access to the Jacobian diagonal $\mathrm{diag}\left( \partial_{\bm x} \alpha(\bm x; \bm \theta)\right)$ and using eqs. \eqref{eq:grad-density-in}, \eqref{eq:grad-density-param} (see next paragraph for an implementation example).

\paragraph{Algorithmic implementation}
The procedure above is easy to implement in many deep learning frameworks such as \texttt{PyTorch}. We give a pseudo-code implementation below. Here \texttt{vjp} denotes the vector-jacobian product e.g. as implemented in \texttt{PyTorch} via the \texttt{torch.autograd.grad} function.
\begin{verbatim}
def forward_pass(root_finder, bijection, output, params):
    ''' computes forward pass via black-box root finder '''
    
    # compute inverse via black-box method
    input = root_finder(bijection, output, params)
        
    # compute forward log det jacobian
    ldj = jacobian_log_determinant(bijection, input, params)
    
    # return input and corresponding log det jacobian
    return input, -ldj

def backward_pass_x(bijection, output_grad_x, output_grad_ldj, input, params):
    ''' computes gradients with respect to inputs '''
    
    # compute diagonal jacobian and its log
    jac = diagonal_jacobian(bijection, input, params)
    log_jac = log(jac)
    
    # reweigh output gradients with inverse diagonal jacobian (eqs. 53 & 55)
    output_grad_x = output_grad_x / jac
    output_grad_ldj = output_grad_ldj / jac
    
    # compute dg/dx
    f = vjp(output=log_jac, input=input, output_grad=ones_like(jac))
    
    g1 = output_grad_x
    g2 = f * output_grad_ldj
    
    g = g1 + g2
    
    return g
    
    
def backward_pass_params(bijection, output_grad_x, output_grad_ldj, input, params): 
    ''' computes gradients with respect to parameters '''
    
    # compute output, diagonal jacobian and its log
    output = bijection(input, params)
    jac = diagonal_jacobian(bijection, input, params)
    log_jac = log(jac)
    
    # reweigh output gradients with inverse diagonal jacobian (eqs. 54 & 56)
    output_grad_x = output_grad_x / jac
    output_grad_ldj = output_grad_ldj / jac
    
    # compute dg/dx
    f = vjp(output=jac, input=input, output_grad=ones_like(jac))
    
    # compute sum of eqs. 54 & 56 in one operation
    u = [output, output, jac]
    v = [-output_grad_x, -f * output_grad_ldj, -output_grad_ldj]
    g = vjp(output=u, input=params, output_grad=v)
    
    return g
\end{verbatim}

\subsection{Details on multi-bin bisection}
Here we give additional details to the multi-bin bisection as sketched in Sec. \ref{sec:black-box-inverse-optimization}.

Instead of just keeping track of a left and right bound of the search interval (1) we split it into $K$ bins (2) identify the right bin (3) recursively apply the search to this smaller bin. Algorithmically, the procedure is given as follows:
\begin{enumerate}
    \item Let $[a, b]$ be a closed interval which is known to contain $x$ s.t. $f(x) = y$.
    \item For $k=0\ldots K$ define $s_{k} = \tfrac{k}{K} (b-a) + a$.
    \item Find $k$ such that $f(s_{k}) - y < 0$ and $f(s_{k+1}) - y > 0$.
    \item If $\left|f(s_{k}) - y\right| < \epsilon$ return $x \approx s_{k}$. Else set $a:= s_{k}, ~ b:= s_{k+1}$ and continue with step 2.
\end{enumerate}
We can see that $K=2$ results in the usual bisection method. However, each step can be executed in vectorized form. Thus for a fixed precision we can reduce the number of iterations by a factor $O\left(\tfrac{1}{\log K}\right)$ at the expense of increasing memory by a factor $O\left(K\right)$.

\section{Additional Details for Experiments}

\subsection{Illustrative Toy Examples}

\begin{figure}[htbp]
    \centering
    \includegraphics[width=0.99\textwidth]{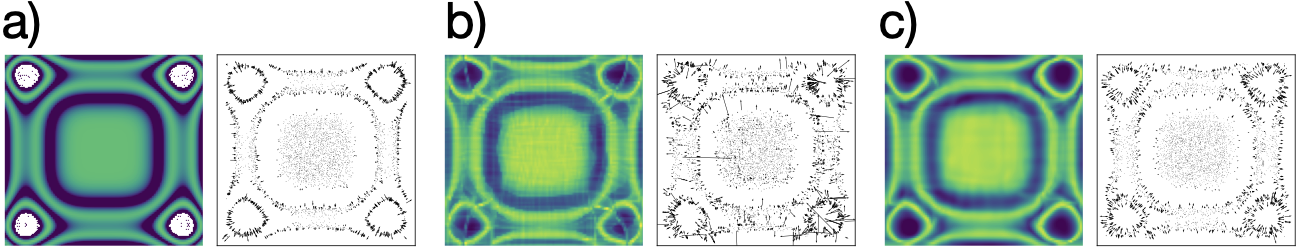}
    \caption{a) Reference density and corresponding force field as approximated by b) a $C^{1}$-smooth NSF and c) a $C^{\infty}$-smooth flow using the mixture of bump functions introduced in Sec. \ref{sec:bump-functions}.}
    \label{fig:toy-periodic}
\end{figure}

In addition to the simple example presented in Fig. \ref{fig:toy} we give another example for the periodic case in Fig. \ref{fig:toy-periodic}. As in the non-periodic case, the periodic splines introduce dramatic outliers in the forces, whereas the force field of the $C^{\infty}$ periodic flows remains smooth.

\paragraph{Compared architecture}
In both experiments (periodic and non-periodic), we used flows made of four coupling layers where we swapped first and second dimension between each layer. Each conditional component-wise transformation used 40 bins (spline case) or 40 mixture components (smooth cases). For the conditioner networks we used simple dense nets with two hidden layers of size 100 and Swish activations. To satisfy the periodic boundary condition in the periodic case we expand the inputs to these conditioner nets in a cosine basis. 

\paragraph{Potentials and data}
We use two simple potentials allowing us to compute the ground truth density and forces analytically.

For the non-periodic case presented in Fig. \ref{fig:toy} we used the energy 
\begin{align}
    u(\bm x) = -\log\left(\sum_{i} \alpha_{i} \exp\left(-\frac{\| \|\bm x\|_2 - r_{i}\|_2^2}{2 \sigma}\right) \right)
\end{align}
with $\sigma = 0.06$ and $\alpha =\left[ 1, 0.8, 0.6, 0.4 \right]$.

For the periodic case presented in Fig. \ref{fig:toy-periodic} we used the energy
\begin{align}
    \beta(\bm x) &= \left( \cos\left(x_{1}^{2}\right) + \cos\left(x_{2}^{2}\right) + 2 \right)^{\frac{1}{2}} \\
    u(\bm x) &= -\log\left(\sum_{i} \alpha_{i} \exp\left(-\frac{\| \beta(\bm x) - r_{i}\|_2^2}{2 \sigma}\right) \right)
\end{align}
with $\sigma = 0.05$ and $\alpha =\left[ 1, 0.8, 0.6, 0.4 \right]$.

We generated data from $u$ by first running 1,000  Metropolis-Hastings  steps using 10,000 parallel chains. The the initial configurations fo the chain were sampled in a regular grid. Then we ran each chain for another 10 steps providing us with 100,000 samples in total.

\paragraph{Training}
We trained all flows with MLE using a batch size of 1,000 for 8,000 iterations and using Adam \cite{kingma2014adam} optimizer with learning rate of $0.0005$.

\subsection{Runtime Comparison}

Table \ref{tab:performance} shows the difference in performance between a neural spline flow with analytic inversion and root-finding inversion. For small tensor dimensions, the optimal multi-bin bisection can employ up to 256 bins on the GPU, which results in only a factor of 2-3 slowdown compared to analytic inversion. For larger dimensions, the parallelization over multiple bins becomes less effective, leading to one order of magnitude difference in computational cost.

\begin{table}[ht]
  \caption{Runtimes (in ms) of analytic inversion and root-finding inversion of neural spline flows averaged over 10 runs with 100 evaluations each. $K_\mathrm{opt}$ denotes the number of bins that yielded the fastest root-finding inversion. The computations were performed on an NVIDIA GeForce GTX1080.}
  \label{tab:performance}
  \centering
  \begin{tabular}{llrrrlrrr}
    \toprule
    {} & \multicolumn{4}{c}{Inversion} & \multicolumn{4}{c}{Inversion + Backpropagation} \\
    \cmidrule(lr){2-5} \cmidrule(lr){6-9}
    dim & $K_\mathrm{opt}$ &  analytic &  root-finding &  factor & $K_\mathrm{opt}$ &  analytic &  root-finding &  factor \\
    \midrule
2& 156.8  &  11.8  &  22.6  &  2.1  &  112.0  &  13.3  &  25.4  &  2.0     \\ 
&  ($\pm$54.6)  &  ($\pm$2.0)  &  ($\pm$2.2)  &  ($\pm$0.5)  &  ($\pm$41.5)  &  ($\pm$1.7)  &  ($\pm$2.6)  &  ($\pm$0.4)     \\ 
8& 60.0  &  11.3  &  25.0  &  2.4  &  46.4  &  12.4  &  24.8  &  2.1     \\ 
& ($\pm$30.5)  &  ($\pm$1.9)  &  ($\pm$2.1)  &  ($\pm$0.5)  &  ($\pm$21.0)  &  ($\pm$1.5)  &  ($\pm$2.3)  &  ($\pm$0.3)    \\ 
32& 27.6  &  11.2  &  28.3  &  2.7  &  48.0  &  11.8  &  31.5  &  3.0     \\ 
&($\pm$9.8)  &  ($\pm$1.7)  &  ($\pm$2.7)  &  ($\pm$0.4)  &  ($\pm$20.2)  &  ($\pm$1.9)  &  ($\pm$2.8)  &  ($\pm$0.8)     \\ 
128& 14.4  &  12.6  &  44.1  &  3.7  &  12.4  &  12.1  &  40.7  &  3.5   \\ 
&($\pm$4.4)  & ($\pm$1.9)  &  ($\pm$5.4)  &  ($\pm$0.6)  &  ($\pm$2.9)  &  ($\pm$1.5)  &  ($\pm$4.9)  &  ($\pm$0.6)    \\ 
512& 6.0  &  12.2  &  76.0  &  6.5  &  5.6  &  10.6  &  75.7  &  7.8     \\ 
&($\pm$1.3)  &  ($\pm$1.5)  &  ($\pm$5.3)  &  ($\pm$1.1)  &  ($\pm$1.2)  &  ($\pm$1.9)  &  ($\pm$3.5  &  ($\pm$1.5)     \\ 
2048& 4.0  &  15.8  &  209.8  &  13.4  &  4.0  &  16.1  &  211.1  &  13.2    \\ 
&($\pm$0.0)  &  ($\pm$1.0)  &  ($\pm$3.6)  &  ($\pm$0.9)  &  ($\pm$0.0)  &  ($\pm$0.9)  &  ($\pm$4.1)  &  ($\pm$0.7)     \\ 
    \bottomrule
\end{tabular}
\end{table}

\subsection{Alanine Dipeptide Target Energy and Data}
\label{sec:ala2}
The molecular system contained one alanine dipeptide molecule (a 22-atom molecule) in implicit solvent. Its energy function was defined by the Amber ff99SB-ILDN force field with the Amber99 Generalized Born (OBC) solvation model. All covalent bonds were flexible.

Training samples were generated in OpenMM by running molecular dynamics (MD) simulations in the canonical ensemble at 300 K. The 1 $\mu$s simulation used Langevin integration with a 1/ps friction coefficient and a 1 fs time step. Coordinates and forces were saved in 1 ps intervals. 


The signature feature of the Boltzmann distribution for alanine dipeptide is the joint marginal density over its two backbone dihedrals. This distribution is depicted in the so-called Ramachandran plots in Fig. \ref{fig:ala2_bg}, which are log-scaled heatmaps of the populations. Rotations around these two dihedrals have relatively long transition times and determine the global shape of the molecule, see the exemplary samples in Fig. \ref{fig:ala2_samples} a).

\subsection{Alanine Dipeptide Boltzmann Generator}
\label{sec:ala2bg}

\begin{figure}
    \centering
    \includegraphics[width=0.9\linewidth]{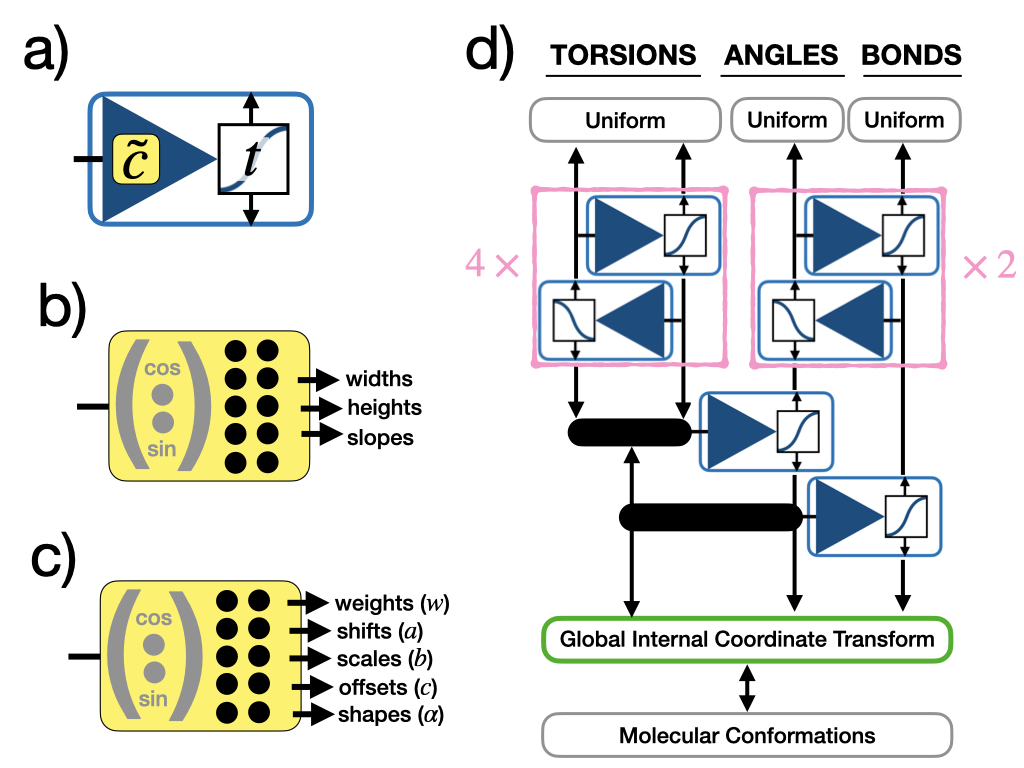}
    \caption{Network architecture -- (a) Coupling block: a dense conditioner network $\tilde{c}$ generates shape parameters that define an elementwise bijective transform $t$ from the first input channel. The transform is then applied to the second input channel. (b) Conditioner network for spline transforms returning bin widths, bin heights, and slopes at the support points. Circular inputs are first wrapped around the unit circle. For circular outputs, the first and last slope are enforced to be identical. (c) Conditioner network for smooth transforms (mixtures of bump functions) returning the shape parameters for each bump function. (d) Boltzmann generator architecture for alanine dipeptide using coupling blocks as in (a). The global translation and rotation required for the coordinate transform are fixed. Torsions are circular, angles and bonds are not.}
    \label{fig:network}
\end{figure}

The normalizing flow was set up in a physically informed way, see Fig. \ref{fig:network}. All learnable transforms operated in an internal coordinate framework, where deeper parts of the network corresponded to torsion angles. This is a natural choice, as rotation around torsion angles determine the global structure of the molecules, while bond lengths and 1-3 angles are comparatively stiff.

The base density for intramolecular bond lengths, angles, and torsion angles is defined as uniform distributions on the unit hypercube, $\mathcal{U}\left([0,1]^{21} \times [0,1]^{20} \times [0,1]^{19}\right)$. The latent variables corresponding to torsion angles were split into two input channels. In the first 8 coupling layers, the two torsion channels were conditioned on each other using either neural spline transforms or mixtures of smooth bump functions. After this transformation, the two channels were merged back into a single torsion channel. Bond and angle channels were conditioned on each other using four subsequent coupling layers. Finally, the angle channel was conditioned on all torsions and the bond channel on all angles and torsions in an autoregressive manner.

The number of bins and components was 8 for spline and bump transforms, respectively. All elementwise transforms were defined as bijections on the unit interval, where torsions were considered as circular elements. All conditioners had two fully connected hidden layers with 64 features each and $sin$ activations. All circular degrees of freedom (torsions) were mapped onto the unit sphere before being passed into the dense conditioner network. Torsional degrees of freedom were transformed with circular transforms, bonds and angles with non-circular transforms.

The final two layers were designed as follows. First, all IC channels were mapped from $[0,1]$ to their respective domains by fixed linear transformations, where bonds and angles were constrained to physically reasonable values, [0.05 nm, 0.3 nm] for bonds and [0.15$\pi$, $\pi$] for angles, which bracket the values encountered in the data. Finally, the ICs were transformed to Cartesian coordinates through a global internal coordinate transform.
The global origin and rotation of the system were fixed in the forward pass and ignored in the inverse pass. The flows with spline transformers had 398,691 parameters in total. The flows with smooth transformers had 758,168 parameters as the shape parameters of the bump functions are also learnable.

\subsection{Boltzmann Generator Training}
\label{sec:ala2training}
The loss function \eqref{eq:loss} was minimized using 90\% of the data set and a batch size of 128. The inversion of smooth flows was performed with multi-bin root finding and $K=100$ bins. The Adam optimizer \cite{kingma2014adam} was initialized with a learning rate $5\times 10^{-4}$ and the learning rate was scaled by $0.7$ after each epoch. Optimization was run for 10 epochs. For validation, the negative log likelihood, force matching error, and reverse KL divergence (up to an additive constant) were computed over the remaining 10\% of the dataset. 

Singularities of the target energy function can negatively affect energy-based training. To stabilize the computation of the reverse KL divergence, the loss was cut off for exploding energies. Concretely, the reverse KL divergence was replaced by $\Lambda \left(\mathcal{L}_{\mathrm{KLD}}(\bm{\theta}) \right)$ with

\begin{equation*}
\Lambda(x) := \min \left\{ 
     x, \ 
     10^3 + \log(1 + x - 10^3), \ 
     10^9
\right\}.
\end{equation*}

The training runs for the comparison in Table \ref{tab:metrics} were done over a reduced dataset with only every tenth trajectory snapshot present in both the train and test set and with a constant learning rate of $5\times10^{-4}$ over 10 epochs.

\subsection{Affine Boltzmann Generator}
To compare with a previously published method, a Boltzmann generator with affine transforms was trained using density estimation and the same hyperparameters as the smooth flow depicted in Fig. \ref{fig:ala2_bg}. In comparison, the affine flow does not as cleanly separate modes on the torsion marginal and produces inaccurate forces and energies.

\begin{figure}
    \centering
    \includegraphics[width=\linewidth]{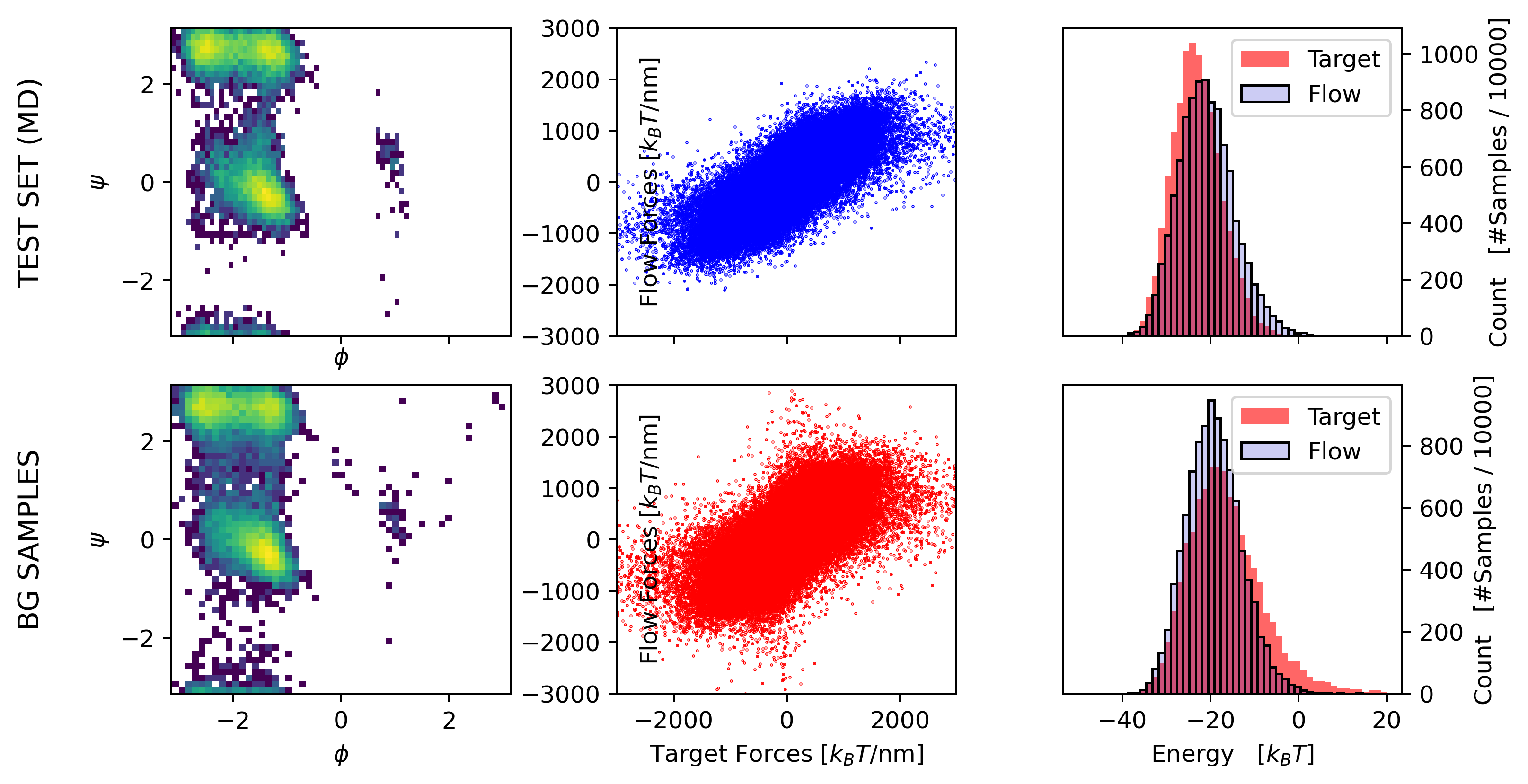}
    \caption{Boltzmann generators with affine (RealNVP) transforms trained through likelihood maximization. See Fig. \ref{fig:ala2_bg} (c) for comparison.} 
    \label{fig:affine_bg}
\end{figure}

\begin{figure}[htbp]
    \centering
    \includegraphics[width=0.8\linewidth]{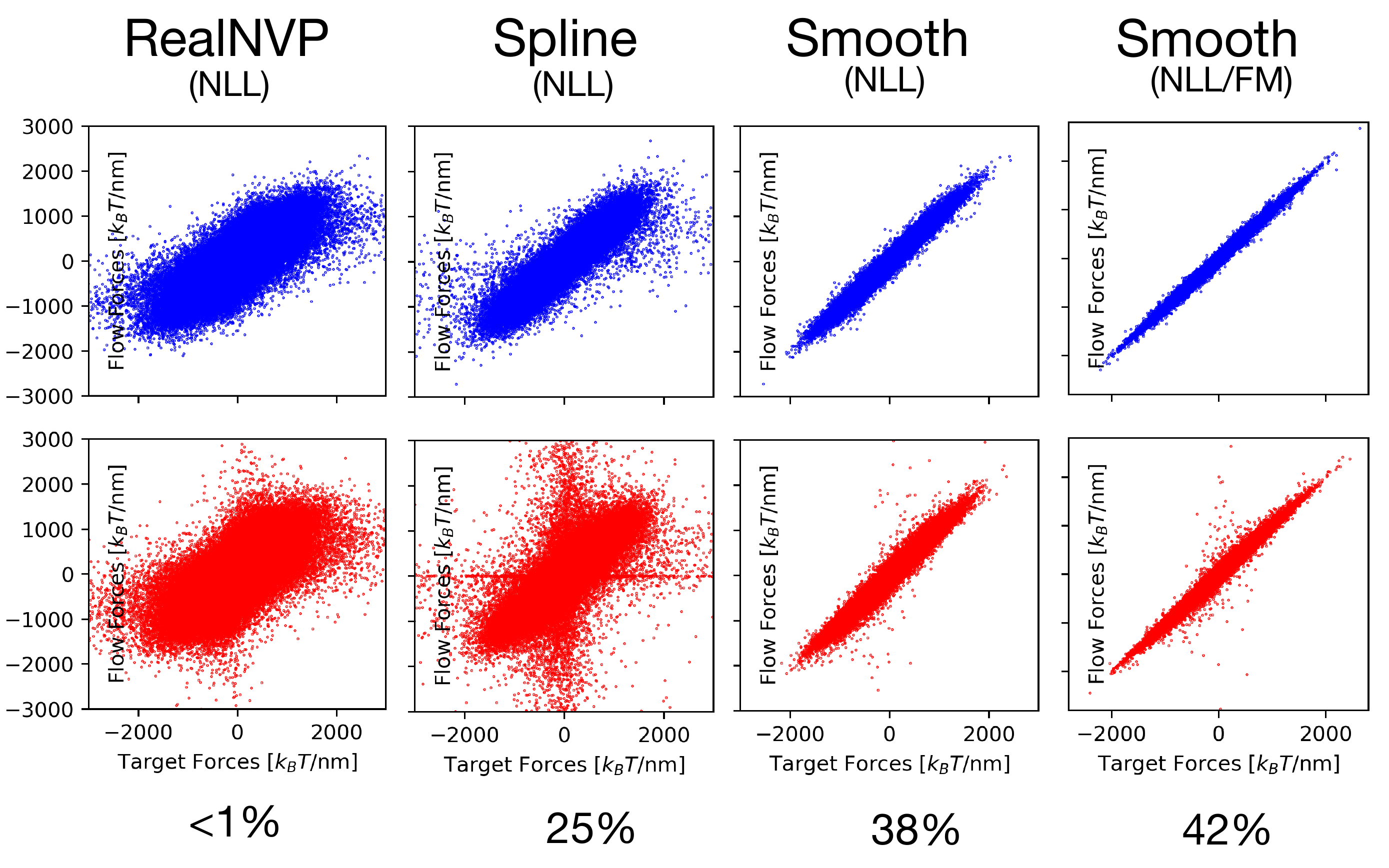}
    \caption{Force residues on the test set (upper row) and on BG samples (lower row) for different flow transforms and training methods. The numbers in the last row denote the sampling efficiency. \label{fig:forces_se}}
\end{figure}

Figure \ref{fig:forces_se} compares forces and sampling efficiency between Boltzmann generators using different types of transforms and training methods via the protocol described in Section \ref{sec:ala2training}.
The sampling efficiency is evaluated as the Kish effective sample size divided by the number of samples.

\subsection{Molecular Simulations}
MD simulations of alanine dipeptide were run for three different potential energy functions. The first potential was the molecular target energy as defined by the Amber ff99SB-ILDN force field with the OBC implicit solvent model. As typical for MD potentials, the energy is defined as a sum over individual terms that correspond to covalent bond stretching, angle bending, rotation around proper and improper dihedrals, pairwise Coulomb and Lennard-Jones interactions, as well as an Generalized-Born implicit solvent term that takes into account the solvent-accessible surface area of each atom and the solvent dielectric constant (78.5 for water). The energy and forces were evaluated using OpenMM. 


The other two potential energies were defined by normalizing flows that were trained on the full dataset for 10 epochs each. The energy of a flow is given by $u_\theta = -\log p_f(\cdot; \bm{\theta}),$ where $p_f$ denotes the push-forward distribution of the prior under the learned bijection $f.$ Both flows used the same architecture except all flow transforms were smooth transforms (mixtures of 8 smooth bump functions) in one case and neural spline transforms (rational-quadratic splines \cite{rezende2020normalizing, wirnsberger2020targeted} with 8 bins) in the other case. 
The smooth flow was the one described in Sec. \ref{sec:smooth_fm}, which was trained by a combination of force matching and MLE. The spline flow was trained by MLE only since spline flows trained with a mixed loss performed poorly on the validation set (cf. Table \ref{tab:metrics}). Otherwise, all hyperparameters were identical. From each training, the flow with the smallest validation loss was selected for the MD simulation.

The first 10 samples from the dataset were chosen as starting configurations. Initial velocities were sampled randomly from a Maxwell-Boltzmann distribution at 300 K. First,  1000 integration steps were performed with a Langevin integrator in each potential to equilibrate the kinetic energy. Next, the thermostat was removed and the system was propagated for 5000 steps using a velocity Verlet integrator, which keeps the total energy constant up to numerical errors. 

Consequently, the energy fluctuation and drift are a measure for numerical errors accumulated in the simulations. For the classical force field, these errors arise primarily from the fastest degrees of freedom (in this case the covalent hydrogen bond vibration) that are not sufficiently resolved by the 1 fs time step. The fact that smooth flows exhibit energy fluctuations in a similar range as the MD force field indicates that the roughness of the potential energy surface is comparable.

\section{Computing Requirements}
The MD simulation and network training was performed on an in-house cluster equipped with NVIDIA GeForce GTX 1080 GPUs. The net runtime for all sampling and training reported in this work was approximately (30 experiments x 10 replicas x 5h) = 1500 h.

\section{Used Third-Party Software}
All our models were implemented in \texttt{PyTorch} \cite{paszke2017automatic}. For the Neural Spline Flows, we used the \texttt{nflows} library provided by \citet{durkan2019neural}. We further used \texttt{OpenMM} \cite{eastman2017openmm} for the simulations and \texttt{mdtraj} \cite{mcgibbon2015mdtraj} for analyzing the MD data. Beyond this we used \texttt{NumPy}\cite{harris2020array} for all non-GPU related numerical computations and \texttt{Matplolib} \cite{Hunter:2007} for plotting.


\end{document}